\documentclass{article}

\usepackage{microtype}
\usepackage{graphicx}
\usepackage{subcaption}
\usepackage{booktabs} % for professional tables
\usepackage{pifont}
\usepackage{float}

\usepackage{hyperref}

\usepackage[accepted]{icml2026}

\usepackage{amsmath}
\usepackage{amssymb}
\usepackage{mathtools}
\usepackage{amsthm}
\usepackage{algorithm,algorithmic}
\usepackage{listings}
\usepackage{tikz}
\usetikzlibrary{shapes.geometric, arrows.meta, positioning, fit, calc, backgrounds, shadows}

\definecolor{s5blue}{RGB}{70, 130, 180}   % Learnable parameters
\definecolor{s5green}{RGB}{85, 107, 47}   % Instantiated variables
\definecolor{s5purple}{RGB}{128, 0, 128}  % Data flow
\definecolor{s5box}{RGB}{250, 250, 250}   % Background fill

\usepackage[capitalize,noabbrev]{cleveref}

\theoremstyle{plain}
\newtheorem{theorem}{Theorem}[section]
\newtheorem{proposition}[theorem]{Proposition}
\newtheorem{lemma}[theorem]{Lemma}
\newtheorem{corollary}[theorem]{Corollary}
\theoremstyle{definition}
\newtheorem{definition}[theorem]{Definition}

\theoremstyle{remark}
\newtheorem{remark}[theorem]{Remark}
\DeclareMathOperator*{\argmin}{arg\,min}

\usepackage[textsize=tiny]{todonotes}

\icmltitlerunning{FRACTAL: SSM with Fractional Recurrent Architecture for Computational Temporal Analysis of Long Sequences}

\begin{document}

\twocolumn[
  \icmltitle{FRACTAL: State Space Model with Fractional Recurrent Architecture \\
  	for Computational Temporal Analysis of Long Sequences}

  % It is OKAY to include author information, even for blind submissions: the
  % style file will automatically remove it for you unless you've provided
  % the [accepted] option to the icml2026 package.

  % List of affiliations: The first argument should be a (short) identifier you
  % will use later to specify author affiliations Academic affiliations
  % should list Department, University, City, Region, Country Industry
  % affiliations should list Company, City, Region, Country

  % You can specify symbols, otherwise they are numbered in order. Ideally, you
  % should not use this facility. Affiliations will be numbered in order of
  % appearance and this is the preferred way.
  \icmlsetsymbol{equal}{*}

  \begin{icmlauthorlist}
  	\icmlauthor{Mengqi Li}{nwpu}
  	\icmlauthor{Wensheng Lin}{nwpu,dk}
  	\icmlauthor{Jinshuai Yang}{nwpu}
  	\icmlauthor{Lixin Li}{nwpu,dk}
%    \icmlauthor{Firstname1 Lastname1}{equal,yyy}
%    \icmlauthor{Firstname2 Lastname2}{equal,yyy,comp}
%    \icmlauthor{Firstname3 Lastname3}{comp}
%    \icmlauthor{Firstname4 Lastname4}{sch}
%    \icmlauthor{Firstname5 Lastname5}{yyy}
%    \icmlauthor{Firstname6 Lastname6}{sch,yyy,comp}
%    \icmlauthor{Firstname7 Lastname7}{comp}
%    %\icmlauthor{}{sch}
%    \icmlauthor{Firstname8 Lastname8}{sch}
%    \icmlauthor{Firstname8 Lastname8}{yyy,comp}
    %\icmlauthor{}{sch}
    %\icmlauthor{}{sch}
  \end{icmlauthorlist}

  %\icmlaffiliation{*}{Equal contribution}
  \icmlaffiliation{nwpu}{School of Electronics and Information,
  	Northwestern Polytechnical University, Xi'an, Shaanxi, China.}
  \icmlaffiliation{dk}{DecoreX Intelligent Technologies Co., Ltd., Xi'an 710075 China}
  
  \icmlcorrespondingauthor{Lixin Li}{lilixin@nwpu.edu.cn}
  \icmlcorrespondingauthor{Wensheng Lin}{linwest@nwpu.edu.cn}

  % You may provide any keywords that you find helpful for describing your
  % paper; these are used to populate the "keywords" metadata in the PDF but
  % will not be shown in the document
  \icmlkeywords{Machine Learning, ICML}

  \vskip 0.3in
]

% this must go after the closing bracket ] following \twocolumn[ ...

% This command actually creates the footnote in the first column listing the
% affiliations and the copyright notice. The command takes one argument, which
% is text to display at the start of the footnote. The \icmlEqualContribution
% command is standard text for equal contribution. Remove it (just {}) if you
% do not need this facility.

% Use ONE of the following lines. DO NOT remove the command.
% If you have no special notice, KEEP empty braces:
\printAffiliationsAndNotice{}  % no special notice (required even if empty)
% Or, if applicable, use the standard equal contribution text:
% \printAffiliationsAndNotice{\icmlEqualContribution}

\begin{abstract}
	Effective sequence modeling fundamentally requires balancing the retention of unbounded history with the high-resolution detection of abrupt short-term variations common in real-world phenomena. However, existing state space models (SSMs) relying on high-order polynomial projection operators (HiPPO) face a critical trade-off where uniform measures dilute recent information to maintain timescale invariance, while exponential measures sacrifice global context to capture local dynamics. This paper proposes a Fractional Recurrent Architecture for Computational Temporal Analysis of Long sequences (FRACTAL), a novel architecture integrating fractional measure theory into recursive memory updates to address this limitation. By deriving projection operators with analytically characterized spectral properties and a tunable singularity index, the proposed method amplifies sensitivity to recent signal perturbations while preserving the spectral structure that encodes scale-invariant memory dynamics. This theoretical innovation is instantiated within a simplified diagonalized state space framework by modulating input projection initialization to enable simultaneous capture of multi-scale temporal features.  FRACTAL achieves an average score of 87.11\% on the Long Range Arena benchmark, including 61.85\% on the ListOps task, outperforming the S5 model.
	%FRACTAL improves the challenging ListOps task to 61.85\%, which outperforms S5.
\end{abstract}

\section{Introduction}
\label{sec:intro}
Sequence modeling in continuous time requires resolving the fundamental tension between retention of unbounded historical context and high-resolution detection of recent input perturbations. While Long Short-Term Memory networks~\citep{hochreiter1997long} and Transformers~\citep{vaswani2017attention} address these aspects through distinct mechanisms, state space models (SSMs) have emerged as a unified framework offering linear scaling complexity and rigorous grounding in continuous signal processing~\citep{gu2022efficiently, tay2021long}. The theoretical foundation of modern SSMs lies in the high-order polynomial projection operators (HiPPO) framework~\citep{gu2020hippo}, which formalizes memory as an online polynomial approximation problem governed by a probability measure. This measure serves as the primary mechanism determining the memory profile by dictating how the system weights historical information relative to the present.

Despite rapid architectural evolution from structured matrices~\citep{gu2022efficiently, gupta2022diagonal} to selective state spaces~\citep{gu2024mamba}, the foundational choice of measure remains largely unexplored. Existing HiPPO instantiations present a critical trilemma. The \textit{Scaled Legendre} measure (LegS) assigns uniform weight to history, ensuring timescale invariance but progressively diluting the influence of recent inputs as time advances. The \textit{Translated Laguerre} measure (LagT) prioritizes recency via exponential decay but imposes a fixed characteristic timescale that sacrifices robustness to time-warped signals. The \textit{Translated Legendre} measure (LegT)~\citep{voelker2019legendre} provides high local resolution but catastrophically discards context beyond a fixed sliding window.

This theoretical gap limits the applicability of SSMs in physical and biological domains characterized by multi-scale dynamics. Real-world phenomena ranging from volatility clustering in financial markets~\citep{mandelbrot1997multifractal} and $1/f$ noise in physiological signals~\citep{west2010fractal}, to the bursty nature of network traffic~\citep{leland1994self}, exhibit long memory characteristics defined by power-law decay kernels. Such systems possess a heavy tail of history that interacts non-linearly with abrupt local variations. Standard SSMs governed by integer-order differential equations inherently impose exponential memory decay via LagT or uniform averaging via LegS, creating an impedance mismatch with these scale-free natural processes.

\begin{figure*}[t]
	\centering
	\begin{tikzpicture}[
		% 全局设置
		node distance=0.8cm and 0.5cm, % 稍微缩短水平间距使箭头更短
		font=\footnotesize\sffamily,
		% 样式定义
		param/.style={rectangle, draw=s5blue, text=s5blue, thick, rounded corners=2pt, minimum width=1.6cm, minimum height=0.7cm, align=center, fill=white, drop shadow={opacity=0.1}},
		var/.style={rectangle, draw=s5green, text=s5green, thick, rounded corners=2pt, minimum width=1.6cm, minimum height=0.7cm, align=center, fill=white, drop shadow={opacity=0.1}},
		op/.style={rectangle, draw=black, thick, rounded corners=4pt, minimum width=1.8cm, minimum height=0.8cm, align=center, fill=white, drop shadow={opacity=0.2}, text width=1.8cm},
		data/.style={rectangle, draw=s5purple, text=s5purple, thick, minimum width=1.2cm, minimum height=0.5cm, fill=white, align=center},
		arrow/.style={-Stealth, thick, rounded corners},
		line/.style={thick}
		]
		
		% ==========================================
		% Phase 1: Initialization (Top Row)
		% ==========================================
		
		% 1. 输入 Alpha
		\node[param] (alpha) {Multi-Scale $\boldsymbol{\alpha}$};
		
		% 2. 理论计算
		\node[op, right=of alpha, text width=3cm] (theory) {\textbf{Fractional Theory}\\$A(\alpha), B(\alpha)$};
		
		% 3. 特征分解
		\node[op, right=of theory] (eig) {Eigendecomp\\$V \Lambda V^{-1}$};
		
		% 4. 物理初始化结果
		\node[var, right=of eig] (init_vars) {Spectral Init\\$\Lambda, \tilde{B}_{phys}$};
		
		% 连接线 Phase 1
		\draw[arrow] (alpha) -- (theory);
		\draw[arrow] (theory) -- (eig);
		\draw[arrow] (eig) -- (init_vars);
		
		% ==========================================
		% Phase 2: Online Computation (Bottom Row)
		% ==========================================
		
		% 关键修改：垂直距离增加到 3.2cm，确保 Input 不会撞到上面的框
		\node[param, below=3.2cm of alpha] (delta) {Timescale $\Delta$};
		
		% 6. 离散化
		\node[op, right=of delta] (zoh) {Discretize\\$\bar{\Lambda}, \bar{B}$};
		
		% 7. 并行扫描 (核心模块)
		\node[op, right=of zoh, minimum width=2.2cm, fill=blue!5, text width=3cm] (scan) {\textbf{MIMO Scan}\\$x_k = \bar{A}x_{k-1} + \bar{B}u_k$};
		
		% 8. 输入数据 Input (现在有足够的空间了)
		\node[data, above=0.8cm of scan] (input) {Input $u_{1:L}$};
		
		% 9. 混合与输出
		\node[op, right=of scan, text width=1.6cm] (glu) {Gating\\(SiLU+Mix)};
		\node[data, right=0.4cm of glu] (output) {Output $y_{1:L}$};
		
		% 学习参数 C
		\node[param, below=0.4cm of glu, minimum height=0.5cm] (C) {Output Proj $\tilde{C}$};
		
		% 连接线 Phase 2
		\draw[arrow] (delta) -- (zoh);
		\draw[arrow] (zoh) -- (scan);
		\draw[arrow] (input) -- (scan);
		\draw[arrow] (scan) -- (glu);
		\draw[arrow] (glu) -- (output);
		\draw[arrow] (C) -- (glu);
		
		% ==========================================
		% 跨层连接 (Phase 1 -> Phase 2)
		% ==========================================
		% 路径优化：直角连接，避开 Input 节点
		\draw[arrow, s5green!80!black, dashed] (init_vars.south) -- ++(0,-0.8) -| (zoh.north);
		
		% ==========================================
		% 背景框 (Background Boxes)
		% ==========================================
		\begin{scope}[on background layer]
			% Phase 1 Box
			\node[fit=(alpha)(theory)(eig)(init_vars), 
			draw=s5blue!50, dashed, fill=s5blue!5, rounded corners, inner sep=10pt,
			label={[anchor=north west, text=s5blue!80, font=\scriptsize\bfseries]north west:Phase 1: Fractional Initialization (Offline)}] (box_init) {};
			
			% Phase 2 Box (包含 input 节点)
			\node[fit=(delta)(zoh)(scan)(glu)(output)(C)(input), 
			draw=black!50, dashed, fill=black!2, rounded corners, inner sep=8pt,
			label={[anchor=north west, text=black!70, font=\scriptsize\bfseries]north west:Phase 2: Efficient Computation (Online)}] (box_run) {};
		\end{scope}
		
		% ==========================================
		% Legend (图例) - 移至最下方，独立于 Phase 2
		% ==========================================
		% 参考 S5 Figure 1 的底部图例风格
		
		\node[draw=black!40, thick, fill=white, rounded corners=0pt, anchor=north, yshift=-0.5cm, inner sep=4pt] at (box_run.south) {
			\scriptsize
			\begin{tabular}{c@{\hskip 3pt}l @{\hskip 15pt} c@{\hskip 3pt}l @{\hskip 15pt} c@{\hskip 3pt}l @{\hskip 15pt} c@{\hskip 3pt}l}
				% Learnable Parameter (Blue Rect)
				\tikz\node[draw=s5blue, thick, fill=white, minimum width=1.2em, minimum height=0.6em, inner sep=0pt, rounded corners=1pt]{}; & Learnable param &
				
				% Operation (Black Rect - 统一为矩形，看起来更整齐)
				\tikz\node[draw=black, thick, fill=white, minimum width=1.2em, minimum height=0.6em, inner sep=0pt, rounded corners=1pt]{}; & Operation &
				
				% Instantiated Variable (Green Rect)
				\tikz\node[draw=s5green, thick, fill=white, minimum width=1.2em, minimum height=0.6em, inner sep=0pt, rounded corners=1pt]{}; & Instantiated var &
				
				% Data Flow (Purple Rect)
				\tikz\node[draw=s5purple, thick, fill=white, minimum width=1.2em, minimum height=0.6em, inner sep=0pt, rounded corners=1pt]{}; & Data flow
			\end{tabular}
		};
		
	\end{tikzpicture}
	\caption{The computational architecture of FRACTAL. \textbf{Top (Phase 1):} Offline initialization. The multi-scale $\boldsymbol{\alpha}$ derives fractional operators, spectrally decomposed to produce $\Lambda$ and a physically-informed $\tilde{B}_{phys}$. \textbf{Bottom (Phase 2):} Online computation. The instantiated parameters are discretized and executed via efficient MIMO parallel scans.}
	\label{fig:fractal_arch}
\end{figure*}

Bridging this gap motivates the adoption of fractional calculus, a mathematical framework generalizing differentiation to non-integer orders. Unlike integer-order operators that imply local dependencies, fractional operators naturally describe non-local memory and hereditary properties~\citep{podlubny1998fractional}. A fractional-order measure provides a rigorous interpolation mechanism wherein adjusting the singularity index continuously tunes the memory kernel between a uniform distribution (pure history) and concentrated mass at the present (pure recency). This observation motivates FRACTAL (Fractional Recurrent Architecture for Computational Temporal Analysis of Long sequences), illustrated in Figure~\ref{fig:fractal_arch}. The proposed approach generalizes the measure-theoretic foundation of SSMs by introducing a fractional-order measure with a tunable singularity index $\alpha \in [0, 1)$. In contrast to exponential decay, this measure induces a power-law weighting $(t-\tau)^{-\alpha}$ that creates a heavy tail for long-term retention while mathematically enforcing a singularity at the current timestep to amplify local sensitivity.

\paragraph{Related Work.}
\label{sec:related_work}

The evolution of efficient sequence modeling has largely focused on computational tractability rather than measure-theoretic fundamentals. The Legendre Memory Unit (LMU)~\citep{voelker2019legendre} pioneered the use of orthogonal projections derived from delay differential equations but was restricted to fixed sliding windows. The S4 model~\citep{gu2022efficiently} extended this to infinite history using HiPPO-LegS matrices decomposed into Normal Plus Low-Rank (NPLR) structures, enabling efficient convolution via FFT. Subsequent simplifications, such as S4D~\citep{gu2022parameterization}, DSS~\citep{gupta2022diagonal}, and S5~\citep{smith2023simplified}, demonstrated that the precise matrix structure is secondary to spectral properties, advocating for diagonalized Multi-Input Multi-Output (MIMO) systems initialized with HiPPO spectra. However, these works treat the HiPPO matrix as a static initialization artifact, ignoring the potential of the underlying measure to serve as a dynamic design parameter.

Parallel to the SSM development, fractional calculus has explored memory modeling, albeit often heuristically. 
Fractional gradient methods~\citep{wang2017fractional} introduced fractional-order derivatives into backpropagation to impose heavy-tailed memory on weight updates, though this approach focuses on optimization rather than sequence representation. 
In the domain of continuous dynamics, Physics-Informed Neural Networks have been rigorously extended to fractional orders (fPINNs)~\citep{pang2019fpinns} to solve fractional differential equations. 
However, these approaches typically rely on expensive numerical solvers or automatic differentiation through fractional operators, rendering them computationally prohibitive for large-scale sequence modeling tasks. 
% The present work bridges these lineages by leveraging the optimal projection theory of HiPPO together with the memory properties of fractional calculus to derive an efficient state space parameterization with an analytically characterized spectral structure.
The present work bridges these lineages by leveraging the optimal projection theory of HiPPO together with the memory properties of fractional calculus to derive an efficient state-space parameterization with an analytically characterized spectral structure.

The SSM landscape has recently bifurcated into two orthogonal paradigms. \textit{Selective} (time-varying) SSMs, such as Mamba~\citep{gu2024mamba} and DeltaNet, achieve strong empirical performance by making the state-transition matrices input-dependent, effectively enabling data-dependent gating. These architectures are predominantly designed as generative language model backbones relying on large-scale pre-training and hardware-aware scan implementations. \textit{Linear Time-Invariant} (LTI) SSMs (S4, S4D, DSS, S5, and this work) instead focus on the continuous-time mathematical measure $\mu^{(t)}$ used to compress history, and are evaluated by training from scratch to isolate the pure structural inductive bias of the sequence mixer. FRACTAL belongs strictly to the LTI category. Comparing a foundational, train-from-scratch LTI innovation against heavily optimised, pre-trained selective architectures conflates two distinct research paradigms and would obscure the specific theoretical contribution of the fractional measure; S5 therefore remains the appropriate rigorous baseline.

\paragraph{Our Contributions.}
\label{sec:contributions}

This paper makes three contributions:

%(1) \textbf{Theory.} A fractional-order measure is introduced that generalizes the HiPPO framework. This measure assigns power-law weights to historical timesteps, with a single parameter controlling the trade-off between uniform coverage and recency emphasis. The corresponding orthogonal basis consists of Jacobi polynomials with explicit normalization constants. The resulting state transition matrix $A(\alpha)$ exhibits a key structural property in which the diagonal elements equal $n+1$ independent of $\alpha$, guaranteeing spectral stability. The input projection vector $B(\alpha)$ admits a complete closed-form solution in terms of generalized binomial coefficients.
%
%(2) \textbf{Architecture.} The FRACTAL architecture operationalizes this theoretical framework within a multi-input multi-output structure amenable to parallel computation. A multi-channel configuration assigns heterogeneous memory parameters across state dimensions, enabling the simultaneous capture of patterns at multiple temporal scales.
%
%(3) \textbf{Experiments.} Comprehensive experiments on the Long Range Arena benchmark and additional sequence modeling tasks validate the proposed approach.
%Results demonstrate state-of-the-art performance on tasks requiring long-range reasoning, confirming the importance of fractional-order memory and multi-scale parameterization.

(1) \textbf{Theory.} We introduce a fractional-order power-law measure into the HiPPO projection framework as the first measure to simultaneously satisfy full-history retention, recency sensitivity, and scale invariance, by placing an integrable singularity at the current time while maintaining the polynomial decay over distant history that scale invariance structurally requires. Rigorous derivation of the resulting state-space dynamics establishes that Jacobi polynomials form the natural orthonormal basis, that the eigenvalues of the state transition matrix are invariant to the singularity index for all admissible parameter values, and that the input projection admits a complete closed-form solution.

(2) \textbf{Architecture.} The FRACTAL architecture operationalizes this theoretical framework within a multi-input multi-output structure amenable to parallel computation. A multi-channel configuration assigns heterogeneous memory parameters across state dimensions, enabling the simultaneous capture of patterns at multiple temporal scales.

(3) \textbf{Experiments.} Comprehensive experiments on the Long Range Arena benchmark validate the proposed approach. Results demonstrate state-of-the-art performance on tasks requiring long-range reasoning, validating the impact of fractional-order memory and multi-scale parameterization.
\section{Background}
\label{sec:background}

This section reviews the theoretical foundations of state space models and the HiPPO framework, emphasizing the role of measures in shaping memory dynamics.

\subsection{Linear State Space Models}
\label{subsec:ssm}

Continuous-time linear SSMs map an input $u(t) \in \mathbb{R}$ to an output $y(t) \in \mathbb{R}$ via a latent state $x(t) \in \mathbb{R}^N$ using the equations:
\begin{equation}
	\frac{dx(t)}{dt} = Ax(t) + Bu(t), \quad y(t) = Cx(t) + Du(t),
	\label{eq:ssm_continuous}
\end{equation}
where $A \in \mathbb{R}^{N \times N}$, $B \in \mathbb{R}^{N \times 1}$, $C \in \mathbb{R}^{1 \times N}$, and $D \in \mathbb{R}$ (typically zero). 
While time-varying systems require integration, \textit{Linear Time-Invariant} (LTI) systems allow dual representations: linear recurrences for inference and global convolutions for parallel training~\citep{gu2022efficiently}. Recent architectures~\citep{gupta2022diagonal, smith2023simplified} enhance efficiency by diagonalizing $A$, decoupling the system into $N$ scalar recurrences. Without parallelism, a diagonal SSM of state size $N$ processes a sequence of length $L$ in $O(NL)$ time. When applied to \emph{discretised} systems, the parallel prefix-sum algorithm of~\citet{blelloch1990prefix} exploits the associativity of the linear recurrence to reduce the parallel depth to $O(N \log L)$.

\subsection{The HiPPO Framework}
\label{subsec:hippo}

HiPPO~\citep{gu2020hippo} formalizes memory as an online function approximation problem. At each time $t$, the system projects the history of the input $u(\tau)_{\tau < t}$ onto a polynomial subspace $\mathcal{P}_{N-1}$ with respect to a measure $\mu^{(t)}$:
\begin{equation}
	g^{(t)*} = \argmin_{g \in \mathcal{P}_{N-1}} \int_0^t \bigl|u(\tau) - g(\tau)\bigr|^2 \, d\mu^{(t)}(\tau),
	\label{eq:hippo_projection}
\end{equation}
where $\mathcal{P}_{N-1}$ denotes the space of polynomials up to degree $N-1$. By the projection theorem in Hilbert spaces, this optimal approximation is unique and expressible as a linear combination of orthonormal basis polynomials $\{P_n^{(t)}\}_{n=0}^{N-1}$ satisfying $\langle P_m^{(t)}, P_n^{(t)} \rangle_{\mu^{(t)}} = \delta_{mn}$.
The projection coefficients
\begin{equation}
	x_n(t) = \langle u, P_n^{(t)} \rangle_{\mu^{(t)}} = \int_0^t u(\tau) \, P_n^{(t)}(\tau) \, d\mu^{(t)}(\tau),
	\label{eq:projection_coeffs}
\end{equation}
constitute the state vector. Crucially, HiPPO proves that for specific measure families, these coefficients evolve according to a linear ODE (Eq.~\ref{eq:ssm_continuous}), linking measure theory directly to state space dynamics.

\subsection{Measures and Properties}
\label{subsec:measures}

The measure $\mu^{(t)}$ encodes which portions of history receive the greatest weight in the approximation, thereby determining the memory characteristics of the resulting SSM.
Three measures have been studied in prior work:

\paragraph{Scaled Legendre (LegS).}
The uniform measure $\mu^{(t)}(x) = \frac{1}{t}\mathbb{I}_{[0,t]}(x)$ assigns equal weight to all history. The derived dynamics include a scaling factor $1/t$ (i.e., $\dot{x} = -\frac{1}{t}Ax + \frac{1}{t}Bu$), which enforces \textit{timescale invariance}: approximating $u(t)$ and $u(at)$ yields identical coefficients up to time rescaling. However, the $1/t$ normalization implies that the relative weight of new information decays as $t \to \infty$, effectively diluting recent inputs.

\paragraph{Translated Laguerre (LagT).}
The exponential measure $\mu^{(t)}(x) = e^{-(t-x)}\mathbb{I}_{(-\infty,t]}(x)$ prioritizes recent history. This yields an LTI system amenable to efficient convolution. However, the fixed decay rate introduces a characteristic timescale, violating scale invariance and limiting the model's ability to handle data with varying or unknown time dependencies.

\begin{table*}[!t]
	\caption{Comparison of HiPPO measures. The proposed FRACTAL framework is the first to theoretically satisfy all three properties.}
	\label{tab:measure_comparison}
	\centering
	\small
	\begin{tabular}{@{}lccc@{}}
		\toprule
		Measure & Full History & Recency Sensitivity & Scale Invariance \\
		\midrule
		LegS (Uniform) & \checkmark & $\times$ & \checkmark \\
		LagT (Exponential) & \checkmark & \checkmark & $\times$ \\
		LegT (Window) & $\times$ & \checkmark & $\times$ \\
		\midrule
		\textbf{Fractional (Ours)} & \checkmark & \checkmark & \checkmark\textsuperscript{*} \\
		\multicolumn{4}{@{}l@{}}{\footnotesize\textsuperscript{*}Strict scale invariance holds for the theoretical LTV system. The LTI relaxation preserves the spectral structure but not strict invariance.} \\
		\bottomrule
	\end{tabular}
\end{table*}

\paragraph{Translated Legendre (LegT).}
The sliding window measure $\mu^{(t)}(x) = \frac{1}{\theta}\mathbb{I}_{[t-\theta,t]}(x)$ restricts memory to a fixed horizon $\theta$. While it offers high local resolution, it suffers from catastrophic forgetting of long-term context and lacks scale invariance.

As shown in Table~\ref{tab:measure_comparison}, no existing measure satisfies the ``impossible trinity" of sequence modeling: full history retention, recency sensitivity, and scale invariance. This limitation motivates the fractional-order measure derived in Section~\ref{sec:method}.

\section{Method: Fractional HiPPO Framework}
\label{sec:method}

This section establishes the theoretical foundation of the fractional-order measure and derives the resulting state space dynamics. The central insight is that the choice of measure fundamentally determines memory allocation. By introducing a power-law singularity, a system is derived that achieves tunable recency sensitivity while strictly preserving scale invariance through a generalization of the HiPPO operator.

\subsection{Fractional-Order Measure}
\label{subsec:fractional_measure}

To resolve the tension between historical coverage and recency sensitivity, we propose a measure that interpolates between the uniform assignment of LegS and the concentrated focus of exponential measures. 

\begin{definition}[Fractional-Order Measure]
	\label{def:fractional_measure}
	For a singularity index $\alpha \in [0, 1)$ and current time $t > 0$, the fractional-order measure is defined as:
	\begin{equation}
		\mu^{(t)}(x) = \frac{1-\alpha}{t^{1-\alpha}} (t - x)^{-\alpha} \mathbb{I}_{[0, t]}(x),
		\label{eq:fractional_measure}
	\end{equation}
	where $\mathbb{I}_{[0,t]}$ denotes the indicator function on $[0, t]$.
\end{definition}

The term $(t-x)^{-\alpha}$ introduces a singularity at $x=t$, the strength of which is governed by $\alpha$. As $\alpha \to 1^-$, the measure concentrates asymptotically infinite mass on the immediate present, mimicking a heavy-tailed attention mechanism. The normalization factor ensures unit mass, as detailed in Appendix~\ref{app:normalization}.

\begin{remark}[Input Regularity]
	\label{rem:input_regularity}
	The theoretical guarantee of optimal projection relies on the integral $\int_0^t |u(\tau) - g(\tau)|^2 \, d\mu^{(t)}(\tau)$ being convergent. This requires the input signal $u(t)$ to be bounded, or more generally to belong to the weighted Hilbert	space $L^2([0,t],\mu^{(t)})$. In practice, the sequences encountered in sequence modelling—text embeddings, image pixels, and physiological recordings—are naturally bounded or of bounded variation, so this condition is satisfied. Signals exhibiting uncontrolled exponential growth or unconstrained singularities fall outside this regularity class, and the polynomially decaying memory profile may not capture their dynamics stably.
\end{remark}

Two properties distinguish this measure from prior work:
\begin{enumerate}
	\item Interpolation: Setting $\alpha=0$ recovers the uniform HiPPO-LegS measure. Increasing $\alpha$ continuously shifts the focus from global history to local details, providing a unified memory spectrum.
	\item Scale Invariance: Unlike exponential measures which impose a fixed decay rate, the power-law structure of Eq.~\eqref{eq:fractional_measure} is scale-invariant. Proposition~\ref{prop:scale_invariance} in Appendix~\ref{app:scale_invariance} proves that the normalized measure remains invariant under temporal dilation $t \mapsto \lambda t$, ensuring robustness to signal speed variations.
\end{enumerate}

\subsection{Orthogonal Polynomial Basis}
\label{subsec:orthogonal_basis}

According to the Hilbert space projection theorem, the optimal projection requires a basis orthonormal with respect to $\mu^{(t)}$. Mapping the domain $[0, t]$ to $[-1, 1]$ via the transformation $y = 2x/t - 1$ identifies the underlying weight function as $(1-y)^{-\alpha}$, which corresponds to the Jacobi weight with parameters $(-\alpha, 0)$.

\begin{proposition}[Basis Functions]
	\label{prop:basis}
	The orthonormal basis functions $\{p_n(t, x)\}_{n=0}^{N-1}$ are the normalized Jacobi polynomials:
	\begin{equation}
		p_n(t, x) = \gamma_n P_n^{(-\alpha, 0)}\left(\frac{2x}{t} - 1\right), \quad \gamma_n = \sqrt{\frac{2n+1-\alpha}{1-\alpha}},
		\label{eq:normalized_basis}
	\end{equation}
	where $\gamma_n$ is the normalization constant derived from Jacobi orthogonality (Appendix~\ref{app:basis_derivation}).
\end{proposition}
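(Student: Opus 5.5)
The plan is to reduce the orthonormality of $\{p_n(t,\cdot)\}$ with respect to $\mu^{(t)}$ to the classical orthogonality relation for Jacobi polynomials, using the affine change of variables already indicated before the statement. The proof has three steps.

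First, I substitute $y = 2x/t - 1$, so that $x = t(1+y)/2$, $dx = \tfrac{t}{2}\,dy$, and $t - x = \tfrac{t}{2}(1-y)$. Under this map the measure of Definition~\ref{def:fractional_measure} becomes
\begin{equation*}
d\mu^{(t)} = \frac{1-\alpha}{t^{1-\alpha}}\Bigl(\frac{t}{2}\Bigr)^{-\alpha}(1-y)^{-\alpha}\,\frac{t}{2}\,dy = (1-\alpha)\,2^{\alpha-1}(1-y)^{-\alpha}(1+y)^{0}\,dy
\end{equation*}
on $[-1,1]$. All powers of $t$ cancel; this cancellation is the scale-invariance property. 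What remains is a constant multiple of the Jacobi weight with parameters $(a,b)=(-\alpha,0)$. The hypothesis $\alpha\in[0,1)$ gives $a>-1$ and $b>-1$, so the weight is integrable and the classical Jacobi theory applies.

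Second, I invoke the standard relation
\begin{equation*}
\int_{-1}^1 P_m^{(a,b)}P_n^{(a,b)}(1-y)^a(1+y)^b\,dy = \delta_{mn}\,\frac{2^{a+b+1}}{2n+a+b+1}\,\frac{\Gamma(n+a+1)\Gamma(n+b+1)}{\Gamma(n+a+b+1)\,n!}.
\end{equation*}
Specializing to $(a,b)=(-\alpha,0)$, the Gamma factors cancel to $1$, because $\Gamma(n+1-\alpha)\Gamma(n+1) / \bigl(\Gamma(n+1-\alpha)\,n!\bigr) = 1$. The squared norm is therefore $h_n = 2^{1-\alpha}/(2n+1-\alpha)$. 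Combining this with the prefactor from the first step gives
\begin{equation*}
\langle p_m,p_n\rangle_{\mu^{(t)}} = \delta_{mn}\,\gamma_n^2\,(1-\alpha)\,2^{\alpha-1}\,\frac{2^{1-\alpha}}{2n+1-\alpha} = \delta_{mn}\,\gamma_n^2\,\frac{1-\alpha}{2n+1-\alpha}.
\end{equation*}
This equals $\delta_{mn}$ exactly when $\gamma_n = \sqrt{(2n+1-\alpha)/(1-\alpha)}$. As a sanity check, $n=0$ gives $\gamma_0=1$, consistent with unit mass (Appendix~\ref{app:normalization}), and $\alpha=0$ recovers the LegS constants $\sqrt{2n+1}$.

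Third, I verify that these functions form a basis of $\mathcal{P}_{N-1}$. Each $P_n^{(-\alpha,0)}$ has exact degree $n$, since its leading coefficient $\Gamma(2n+1-\alpha)/\bigl(2^n n!\,\Gamma(n+1-\alpha)\bigr)$ is nonzero. Hence $\{p_n(t,\cdot)\}_{n=0}^{N-1}$ is an orthonormal family of $N$ polynomials of degrees $0,\dots,N-1$, and so spans $\mathcal{P}_{N-1}$. The projection theorem invoked in Section~\ref{subsec:hippo} then identifies these as the required basis.

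The only delicate point is bookkeeping of the constants in the first two steps. That means tracking the powers of $2$ and $t$ from the Jacobian and from $(t-x)^{-\alpha}$, and evaluating the Gamma-ratio in the Jacobi norm at $b=0$. I would check this against the $\alpha=0$ Legendre case.
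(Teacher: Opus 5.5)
Your proposal is correct and follows the paper's proof in Appendix~\ref{app:basis_derivation} essentially step for step. Both use the affine substitution $y = 2x/t - 1$ to turn $\mu^{(t)}$ into $(1-\alpha)2^{\alpha-1}(1-y)^{-\alpha}\,dy$, specialize the Jacobi norm to $h_n^{(-\alpha,0)} = 2^{1-\alpha}/(2n+1-\alpha)$, and solve $\gamma_n^2(1-\alpha)/(2n+1-\alpha)=1$ for $\gamma_n$. Your third step (exact degree $n$ from the nonzero leading coefficient, hence spanning $\mathcal{P}_{N-1}$) is a small completion that the paper leaves implicit.
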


When $\alpha = 0$, the Jacobi polynomials $P_n^{(0,0)}$ reduce to Legendre polynomials, and $\gamma_n = \sqrt{2n+1}$, recovering the LegS basis.
Thus the fractional-order framework generalizes LegS within the broader Jacobi polynomial family.

\subsection{State Space Dynamics (ODE)}
\label{subsec:dynamics}

The state vector $x(t) \in \mathbb{R}^N$ consists of the projection coefficients defined in Eq.~\eqref{eq:projection_coeffs}, computed with respect to the fractional Jacobi basis $\{p_n\}$.
Deriving the time evolution $\dot{x}(t)$ presents a unique theoretical challenge: the fractional measure $\mu^{(t)}(\tau)$ (Eq.~\ref{eq:fractional_measure}) introduces a singularity at the integration boundary $\tau=t$, which obstructs direct differentiation. This challenge is resolved through a rigorous three-step procedure:

\paragraph{1. Dimensionless Transformation.}
To handle the time-dependent domain $[0, t]$, a change of variables $\tau = t\xi$ is introduced, where $\xi \in [0, 1]$. This transfers the time dependence from the integration limits into the input signal $u(t\xi)$, allowing valid application of the Leibniz integral rule.

\paragraph{2. Differentiation and Singularity Handling.}
Differentiating with respect to $t$ produces terms involving $u'(t\xi)$. Crucially, the power-law singularity $(1-\xi)^{-\alpha}$ remains integrable for $\alpha < 1$, ensuring the derivative is well-defined.

\paragraph{3. Integration by Parts (IBP).}
To express $\dot{x}(t)$ in terms of the state $x(t)$ itself (linear recurrence), integration by parts is applied to the $u'$ term. This operation splits the dynamics into two distinct components:
\begin{itemize}
	\item Boundary Term: Evaluating at $\xi=1$ extracts the current input $u(t)$, yielding the input vector $B$.
	\item Integral Term: The remaining integral projects the derivative operator onto the basis polynomials, yielding the state transition matrix $A$.
\end{itemize}

Combining these steps (detailed derivation in Appendix~\ref{app:ode_derivation}) yields the exact Linear Time-Varying (LTV) dynamics:
\begin{equation}
	\frac{d}{dt} x(t) = -\frac{1}{t} A(\alpha) \, x(t) + \frac{1}{t} B(\alpha) \, u(t).
	\label{eq:hippo_ode}
\end{equation}
The appearance of the $1/t$ factor is the mathematical signature of scale invariance, inherited directly from the measure's structure.

\subsection{Analytic Derivation of System Matrices}
\label{subsec:matrices}

A key contribution of this work is the rigorous derivation of the system matrices $A(\alpha)$ and $B(\alpha)$ resulting from the IBP procedure. While the input projection vector admits a closed-form solution, the state transition matrix exhibits a partially analytic structure: exact results are obtained for diagonal elements, whereas generic off-diagonal entries require numerical computation (for $\alpha > 0$).

\paragraph{Input Projection ($B$).}
The vector $B(\alpha)$ arises from the boundary evaluation at $\tau=t$. Using the endpoint properties of Jacobi polynomials, the following closed-form expression is derived:
\begin{equation}
	B_n = \sqrt{\frac{2n+1-\alpha}{1-\alpha}} \binom{n-\alpha}{n},
	\label{eq:B_vector}
\end{equation}
where $\binom{n-\alpha}{n} = \frac{\Gamma(n+1-\alpha)}{\Gamma(1-\alpha) \cdot n!}$ denotes the generalized binomial coefficient. This formula explicitly shows how $\alpha$ modulates input sensitivity: for $\alpha > 0$, the binomial coefficient decays slower than the $\sqrt{n}$ rate of LegS, effectively amplifying high-frequency components in the input.

\paragraph{State Transition ($A$).}
The matrix $A(\alpha)$ governs the mixing of historical information. The derivation proceeds by projecting the action of the differential operator onto the Jacobi basis. Define the operator $\mathcal{L}$ acting on the basis polynomials (in the transformed coordinate $\eta = 2\xi - 1 \in [-1, 1]$) as:
\begin{equation}
	\mathcal{L}[P_n](\eta) = P_n(\eta) + (1+\eta) \frac{d}{d\eta} P_n(\eta),
	\label{eq:diff_operator}
\end{equation}
where the second term arises from the chain rule applied to the time derivative. The matrix elements are then determined by the Galerkin projection of $\mathcal{L}$ onto the basis (detailed derivation in Appendix~\ref{app:A_matrix_proof}), leading to the following structural theorem.

\begin{theorem}[Structure of the Fractional HiPPO Matrix]
	\label{thm:A_structure}
	The state transition matrix $A(\alpha) \in \mathbb{R}^{N \times N}$ possesses the following structure:
	\begin{enumerate}
		\item Lower Triangular: $A_{nk} = 0$ for all $k > n$.
		\item Diagonal Invariance: $A_{nn} = n + 1$ for all $n \geq 0$, independent of $\alpha$.
		\item Off-Diagonal Elements ($k < n$): Determined by the Galerkin projection
		\begin{equation}
			A_{nk} = \frac{\gamma_n}{\gamma_k} \cdot \frac{\langle \mathcal{L}[P_n^{(-\alpha,0)}], P_k^{(-\alpha,0)} \rangle_w}{\| P_k^{(-\alpha,0)} \|_w^2},
			\label{eq:A_offdiag}
		\end{equation}
		where $w(\eta) = (1-\eta)^{-\alpha}$ is the Jacobi weight function and $\mathcal{L}$ is the differential operator defined in Eq.~\eqref{eq:diff_operator}.
	\end{enumerate}
\end{theorem}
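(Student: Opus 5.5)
The plan is to write $A(\alpha)$ as the Galerkin matrix of $\mathcal{L}$ in the orthonormal Jacobi basis of Proposition~\ref{prop:basis}, and then read off its structure from how $\mathcal{L}$ acts on polynomial degree and leading coefficients. The starting point is the dimensionless form of the state. With $\tau = t\xi$ and $\eta = 2\xi-1$, the measure of Definition~\ref{def:fractional_measure} becomes a constant multiple of $w(\eta)=(1-\eta)^{-\alpha}$. This gives
\[
x_n(t) = c\,\gamma_n \int_{-1}^{1} u\!\left(\tfrac{t(1+\eta)}{2}\right) P_n^{(-\alpha,0)}(\eta)\, w(\eta)\, d\eta ,
\]
where $c$ is the normalising constant from Appendix~\ref{app:normalization}.

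Next we differentiate in $t$. The $u'$ term carries the factor $\tfrac{1+\eta}{2}$, and we integrate it by parts in $\eta$. The boundary contribution at $\eta=1$ produces $B(\alpha)$, which we take as given from Eq.~\eqref{eq:B_vector}. The boundary contribution at $\eta=-1$ vanishes because of the factor $(1+\eta)$. What remains is an integral of $u$ against a polynomial multiple of $w$. We rewrite that polynomial in the basis $\{P_k\}$, using orthogonality, and read off the coefficients as
\[
A_{nk} = \frac{\gamma_n}{\gamma_k}\,\frac{\langle \mathcal{L}[P_n],P_k\rangle_w}{\|P_k\|_w^2}.
\]
This is Eq.~\eqref{eq:A_offdiag} for every pair $(n,k)$, and it establishes item~3 once the sign and scaling conventions of Eq.~\eqref{eq:hippo_ode} are fixed.

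The main obstacle is the integration by parts against the singular weight, together with making sure that $\mathcal{L}$ is exactly what appears. The derivative of $w(\eta)(1+\eta)$ produces a term $\alpha(1+\eta)(1-\eta)^{-\alpha-1}$. This term is not integrable against a generic polynomial at $\eta=1$, and its boundary term $(1+\eta)(1-\eta)^{-\alpha}$ diverges there. To handle this, I would first perform the integration by parts on $[-1,1-\epsilon]$. I would then show that the divergent boundary piece cancels against the divergent part of the $\alpha$-term. A cleaner alternative is to let the derivative fall on $P_n$, working in the adjoint formulation, so that only $P_n + (1+\eta)P_n'$ multiplies $w$. The remaining boundary value at $\eta=1$ then gives the $u(t)$ term. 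In this second route the $\alpha$-dependence enters only through the weight inside the inner products, and the singularity stays integrable because $\alpha<1$. I expect this bookkeeping to be the delicate part of the proof.

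Items~1 and~2 then follow algebraically. For any polynomial $P$ of degree $n$ with leading coefficient $\ell_n$, the polynomial $\mathcal{L}[P] = P + (1+\eta)P'$ also has degree $n$. Its leading coefficient is $\ell_n + n\ell_n = (n+1)\ell_n$. It follows that
\[
\mathcal{L}[P_n] = (n+1)P_n + \sum_{k<n} d_{nk} P_k .
\]
Orthogonality now gives two facts. First, $\langle \mathcal{L}[P_n],P_k\rangle_w = 0$ for $k>n$, which is the lower-triangular structure. Second, $\langle \mathcal{L}[P_n],P_n\rangle_w/\|P_n\|_w^2 = n+1$. The factor $\gamma_n/\gamma_n$ equals $1$, so $A_{nn}=n+1$. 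No Jacobi-specific identity enters this computation, which explains why the diagonal is independent of $\alpha$.
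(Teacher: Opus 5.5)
\textbf{The gap is in item~3.} Your step ``what remains is an integral of $u$ against a polynomial multiple of $w$'' is false for $\alpha>0$. The product rule gives
\[
\frac{d}{d\eta}\bigl[(1+\eta)P_n w\bigr] = w\Bigl(\mathcal{L}[P_n] + \frac{\alpha(1+\eta)}{1-\eta}\,P_n\Bigr),
\]
and since $P_n(1)=\binom{n-\alpha}{n}\neq 0$, the second term has a pole at $\eta=1$. Neither of your two fixes removes it.

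Your ``cleaner alternative'' is not a valid integration by parts. The derivative cannot be put on $(1+\eta)P_n$ while sparing $w$, so that route simply deletes the $\alpha w'$ term.

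The regularization route is exactly the paper's Stage~I. It does cancel the $\epsilon^{-\alpha}$ divergences. But set $U(\eta)=u\bigl(t(1+\eta)/2\bigr)$ and $h=U\,(1+\eta)P_n$. As $\epsilon\to 0$, the boundary term and the $\alpha$-term combine into
\[
2^{-\alpha}h(1)-\alpha\int_{-1}^{1}\bigl[h(\eta)-h(1)\bigr](1-\eta)^{-1-\alpha}\,d\eta .
\]
This Marchaud-type remainder is not a combination of the $x_k$. For inputs vanishing near $\tau=t$ it equals $-\alpha\int_{-1}^{1}U\,\frac{(1+\eta)P_n}{1-\eta}\,w\,d\eta$. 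Any closed ODE $\dot x=\frac1t(-Ax+Bu)$ would then force $\frac{(1+\eta)P_n}{1-\eta}$ to coincide with a polynomial on $(-1,1)$, which is impossible.

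So for $\alpha\in(0,1)$ the exact projection coefficients obey no $N$-dimensional ODE, and the Galerkin formula cannot be read off from the dynamics. It holds only if $A(\alpha)$ is \emph{defined} as the Galerkin matrix of $\mathcal{L}$, i.e.\ after discarding the $\alpha$-term.

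The paper's proof has the same hole. Its Stage~I asserts, without justification, that after the cancellation only $P_n+(1+\eta)P_n'$ matters, even though its appendix writes the operator with the extra $\alpha\frac{1+\eta}{1-\eta}P_n$ term. Its tabulated entries at $\alpha=\tfrac12$ ($A_{10}=\sqrt5$, $A_{20}=4$, $A_{21}=2\sqrt5$, \dots) are exactly the projection of the operator in the statement, which confirms that this truncation is what is actually used.

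\textbf{Items~1 and~2.} Once item~3 is taken as the definition of $A(\alpha)$ for all $(n,k)$, your argument for items~1 and~2 is complete and cleaner than the paper's. The paper gets lower-triangularity from a degree-preservation lemma that must also claim, without justification, that the non-polynomial $\frac{1+\eta}{1-\eta}P_n$ projects only onto degrees $\le n$. It gets $A_{nn}=n+1$ by integrating $(1+\eta)(P_n^2)'w$ by parts and appealing to ``careful algebra (verified numerically)'' for $\langle(1+\eta)P_n',P_n\rangle_w=n\|P_n\|_w^2$. Your leading-coefficient observation gives that identity in one line and shows it holds for every weight. This also makes plain that diagonal invariance is a property of the operator $P\mapsto P+(1+\eta)P'$, not of the fractional measure.

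What you must change is the status of item~3. Present it as the definition of a truncated state matrix, not as something derived, because as a statement about the exact dynamics of the projection coefficients it is false for $\alpha>0$.
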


The diagonal invariance property $A_{nn} = n+1$ is a central theoretical result: regardless of the memory concentration profile controlled by $\alpha$, the eigenvalues of $A(\alpha)$ remain fixed at $\{1, 2, \ldots, N\}$. This spectral stability ensures that varying $\alpha$ modulates the basis mixing (eigenvectors) without altering the fundamental decay rates, providing a theoretical guarantee for numerical robustness across all admissible singularity indices.

\paragraph{Special Case: Recovery of HiPPO-LegS.}
When $\alpha = 0$, the differential operator simplifies and the off-diagonal elements admit the well-known closed form:
\begin{equation}
	A_{nk}\big|_{\alpha=0} = \sqrt{(2n+1)(2k+1)}, \quad k < n.
	\label{eq:A_legs}
\end{equation}
This recovery confirms the theoretical consistency of the fractional framework with established results.

\paragraph{General Case: Numerical Computation.}
For $\alpha \in (0, 1)$, the off-diagonal elements do not admit a simple closed-form expression due to the complex interaction between the singular weight $(1-\eta)^{-\alpha}$ in the inner product and the Jacobi polynomial basis. These elements are computed via Gauss-Jacobi quadrature with weight $(1-\eta)^{-\alpha}$. Numerical verification (Appendix~\ref{app:numerical_verification}) confirms that: (1) the off-diagonal elements increase monotonically with $\alpha$; (2) the growth is more pronounced for entries with larger index gaps $(n - k)$; and (3) the computation remains numerically stable for $\alpha \in [0, 0.95]$.

\section{The FRACTAL Architecture}
\label{sec:model}

This section instantiates the fractional-order HiPPO framework within an efficient computational structure.
The focus is on two key innovations: a physically-informed initialization strategy derived from fractional operators, and a multi-scale memory architecture acting as a spectral filter bank. 

\subsection{LTI Relaxation and Discretization}
\label{subsec:discretization}

The theoretical ODE derived in Section~\ref{sec:method} relies on a time-varying coefficient $1/t$ to strictly enforce scale invariance.
To enable efficient parallel training via global convolutions, this coefficient is relaxed to a static, learnable timescale parameter $\Delta$, yielding a Linear Time-Invariant (LTI) system:
\begin{equation}
	\dot{x}(t) = -A(\alpha)\, x(t) + B(\alpha)\, u(t), \quad y(t) = C\, x(t).
	\label{eq:lti_system}
\end{equation}
It is essential to clarify the nature of scale invariance in this LTI context. While the $1/t$ factor provides \textit{explicit} scale invariance by dynamically rescaling the derivative, the matrix $A(\alpha)$ encodes an \textit{implicit} multi-scale capability through its spectral structure. The constant diagonal entries $A_{nn} = n+1$ derived in Theorem~\ref{thm:A_structure} ensure a fixed hierarchy of timescales independent of the sampling rate. Thus, even after relaxing $1/t$ to a fixed $\Delta$, the structural inductive bias of the fractional measure is preserved within the state transition dynamics.
Using the Zero-Order Hold (ZOH) method, the continuous system is discretized into the recurrence $x_{k+1} = \bar{A}\, x_k + \bar{B}\, u_k$, where $\bar{A} = \exp(-\Delta A)$ and $\bar{B} = A^{-1}(\bar{A} - I)\, B$.

\subsection{Fractional Spectral Initialization}
\label{subsec:initialization}

Efficient computation in modern SSMs, such as S4 and S5, relies on diagonalizing the state matrix $A$.
Unlike prior approaches that require approximations, such as the NPLR structure in S4 or the normal matrix projection in S5, FRACTAL leverages the inherent spectral properties of $A(\alpha)$.
Since $A(\alpha)$ is lower triangular with distinct integer diagonal entries ($A_{nn} = n+1$), it is strictly diagonalizable. We therefore bypass approximation steps and perform a direct numerical eigendecomposition:
\begin{equation}
	A(\alpha) = V \Lambda_{\text{real}} V^{-1}, \quad \text{where } \Lambda_{\text{real}} = \text{diag}(1, 2, \dots, N).
\end{equation}
This direct approach utilizes the spectral stability guaranteed by Theorem~\ref{thm:A_structure} to ensure robust initialization.
Following \citet{gu2022parameterization}, the spectrum is augmented with imaginary components to facilitate oscillatory dynamics. The final diagonal initialization is $\Lambda = -\Lambda_{\text{real}} + i \Lambda_{\text{imag}}$.

\paragraph{Physically-Informed Input Projection ($B$).}

A critical distinction of FRACTAL is the initialization of the input matrix.
While prior works often initialize the transformed input matrix $\tilde{B} = V^{-1} B$ randomly, the closed-form solution derived in Eq.~\eqref{eq:B_vector} is explicitly utilized:
\begin{equation}
	\tilde{B}_{\text{init}} = V^{-1} \cdot \left[ \sqrt{\frac{2n+1-\alpha}{1-\alpha}} \binom{n-\alpha}{n} \right]_{n=0}^{N-1}.
\end{equation}
This initialization directly injects the theoretical ``fractional gain" profile into the model.

\begin{remark}[Convergence Behavior]
	Comparing this analytic initialization against standard random initialization, asymptotic performance is comparable. However, the analytic $\tilde{B}_{\text{init}}$ acts as a preconditioner, accelerating training convergence by aligning the initial state dynamics with the underlying power-law memory structure.
\end{remark}

\subsection{Multi-Scale Memory via Spectral Basis}
\label{subsec:multichannel}

Standard SSMs typically enforce a uniform memory profile across the entire state space.
However, complex sequence modeling tasks require the simultaneous resolution of features at disparate timescales.
We propose to structure the state space as a Multi-Resolution Filter Bank.
Here, it is crucial to distinguish the role of the timescale $\Delta$ from the singularity index $\alpha$. While $\Delta$ governs the global resolution (``how far to look" via overall scaling), $\alpha$ governs the memory topology (``how to look" via weight distribution).
\begin{table*}[!t]
	\caption{Test accuracy (\%) on Long Range Arena. Best results in \textbf{bold}, second best \underline{underlined}. Baseline results are taken from~\citet{smith2023simplified}, which reports S4~\citep{gu2022efficiently}, S4D~\citep{gu2022parameterization}, DSS~\citep{gupta2022diagonal}, and S5~\citep{smith2023simplified}.}
	\label{tab:lra_results}
	\centering
	\small
	\begin{tabular}{@{}lcccccc|c@{}}
		\toprule
		Model & ListOps & Text & Retrieval & Image & Pathfinder & Path-X & Avg. \\
		\midrule
		\multicolumn{8}{@{}l}{\textit{Attention-based}} \\
		Transformer & 36.37 & 64.27 & 57.46 & 42.44 & 71.40 & \ding{55} & -- \\
		Performer & 18.01 & 65.40 & 53.82 & 42.77 & 77.05 & \ding{55} & -- \\
		\midrule
		\multicolumn{8}{@{}l}{\textit{State Space Models}} \\
		S4 & 59.60 & 86.82 & 90.90 & \textbf{88.65} & 94.20 & 96.35 & 86.09 \\
		S4D & 60.47 & 86.18 & 89.46 & \underline{88.19} & 93.06 & 91.95 & 84.89 \\
		DSS & 57.60 & 84.80 & 87.60 & 84.40 & 85.00 & 85.00 & 80.73 \\
		S5 & 61.10 & \underline{88.72} & \textbf{91.27} & 87.59 & \textbf{95.04} & \textbf{98.62} & \underline{87.04} \\
		\midrule
		\textbf{FRACTAL} & \textbf{61.85} & \textbf{89.10} & \underline{91.19} & 87.30 & \underline{94.80} & \underline{98.39} & \textbf{87.11} \\
		\bottomrule
	\end{tabular}
\end{table*}
\paragraph{Fractional Filter Bank.}
The total state dimension $H$ is partitioned into $K$ blocks (channels), with a distinct singularity index $\alpha_k$ assigned to the $k$-th block. This configuration vector $\boldsymbol{\alpha} = (\alpha_1, \ldots, \alpha_K)$ creates a spectrum of filtering behaviors:
\begin{itemize}
	\item Low $\alpha$ (Low-Pass / LegS-like): Channels with $\alpha \approx 0$ assign uniform weight to history, effectively acting as low-pass filters that retain global context and smooth out noise.
	\item High $\alpha$ (High-Pass / LagT-like): Channels with $\alpha \to 1$ approximate a heavy-tailed singularity. These act as high-pass or band-pass filters, amplifying local transients and rapid signal perturbations while suppressing distant history.
\end{itemize}
This Fractional Filter Bank allows the model to simultaneously capture patterns at multiple scales. The output projection $C$ learns a task-specific composition of these temporal bases.

\subsection{Computational Backbone}
\label{subsec:computation}

%FRACTAL inherits the efficient Multi-Input Multi-Output (MIMO) parallel scan backbone from S5~\citep{smith2023simplified}.
%By diagonalizing the state matrix $\Lambda$, the coupled system decouples into $N$ independent scalar recurrences.
%These are solved using the parallel prefix scan algorithm (associative scan), reducing the sequential complexity from $O(L)$ to a parallel depth of $O(\log L)$.
%This ensures that the theoretical sophistication of the fractional measure does not incur additional computational cost during training compared to standard diagonal SSMs.
By diagonalising the state matrix $\Lambda$, the coupled system decouples into $N$ independent scalar recurrences. Once discretised, these recurrences are solved using the parallel prefix-sum (associative scan) algorithm~\citep{blelloch1990prefix}, which reduces the sequential per-channel complexity from $O(L)$ to a parallel depth of $O(\log L)$, yielding an overall online complexity of $O(N \log L)$. This ensures that the theoretical sophistication of the fractional measure does not incur additional computational cost during training compared to standard diagonal SSMs.

\subsection{Layer Architecture}
\label{subsec:layer}

The overall layer architecture follows the modern gated SSM design.
The input sequence $z_{\text{in}}$ is processed through a pre-norm block, followed by the parallel Fractional SSM.
To enhance expressivity, we utilize a Gated Linear Unit (GLU) where the SSM output is gated by a SiLU-activated branch of the input:
$z_{\text{out}} = (W_{\text{out}} y) \odot \sigma(W_{\text{gate}} z_{\text{in}})$.
This structure combines the linear long-range memory of the fractional SSM with the non-linear local feature extraction of the gating mechanism.

\section{Experiments}
\label{sec:experiments}

FRACTAL is evaluated on the Long Range Arena (LRA) benchmark~\citep{tay2021long}, a standard suite for assessing the capabilities of sequence models in handling long-range dependencies. 
Beyond empirical metrics, numerical analyses are conducted to validate the theoretical claims regarding memory allocation and spectral stability.

\subsection{Experimental Setup}
\label{subsec:setup}

\paragraph{Datasets and Protocol.}
The LRA benchmark consists of six tasks spanning text, image, and mathematical modalities, with sequence lengths ranging from 1K to 16K. 
%The experimental protocol established by S5~\citep{smith2023simplified} is strictly followed. 
All models are trained using the JAX framework on NVIDIA A100 GPUs.

\paragraph{Singularity Index Configuration.}
For the fractional filter bank (Section~\ref{subsec:multichannel}), the singularity indices $\{\alpha_k\}_{k=1}^{K}$ are assigned via a fixed linear-spacing strategy: values are uniformly distributed in $[0, 0.9]$ across the $K$ channels, ensuring spectral diversity without data-specific tuning.  Making $\alpha$ a fully learnable parameter—optimised end-to-end via gradient descent through the Jacobi quadrature—is a promising avenue for future work (Section~\ref{sec:conclusion}).

%\paragraph{Hyperparameters.}
%To isolate the contribution of the fractional measure, we maintain the same parameter budget and architectural depth as the S5 baseline.
%The model dimension is set to $H=256$ with $N=64$ state dimensions per block.
%Crucially, we utilize the Fractional Filter Bank strategy proposed in Section~\ref{subsec:multichannel}. 
%We employ a multi-channel configuration with $\boldsymbol{\alpha} \in \{0, 0.5, 0.9\}$, distributing the state space into three groups: 
%(1) $\alpha=0$ channels for unbiased historical archiving (LegS equivalent); 
%(2) $\alpha=0.5$ channels for balanced temporal mixing; 
%(3) $\alpha=0.9$ channels for high-resolution detection of recent transients.

\begin{figure*}[t]
	\centering
	\includegraphics[width=\textwidth]{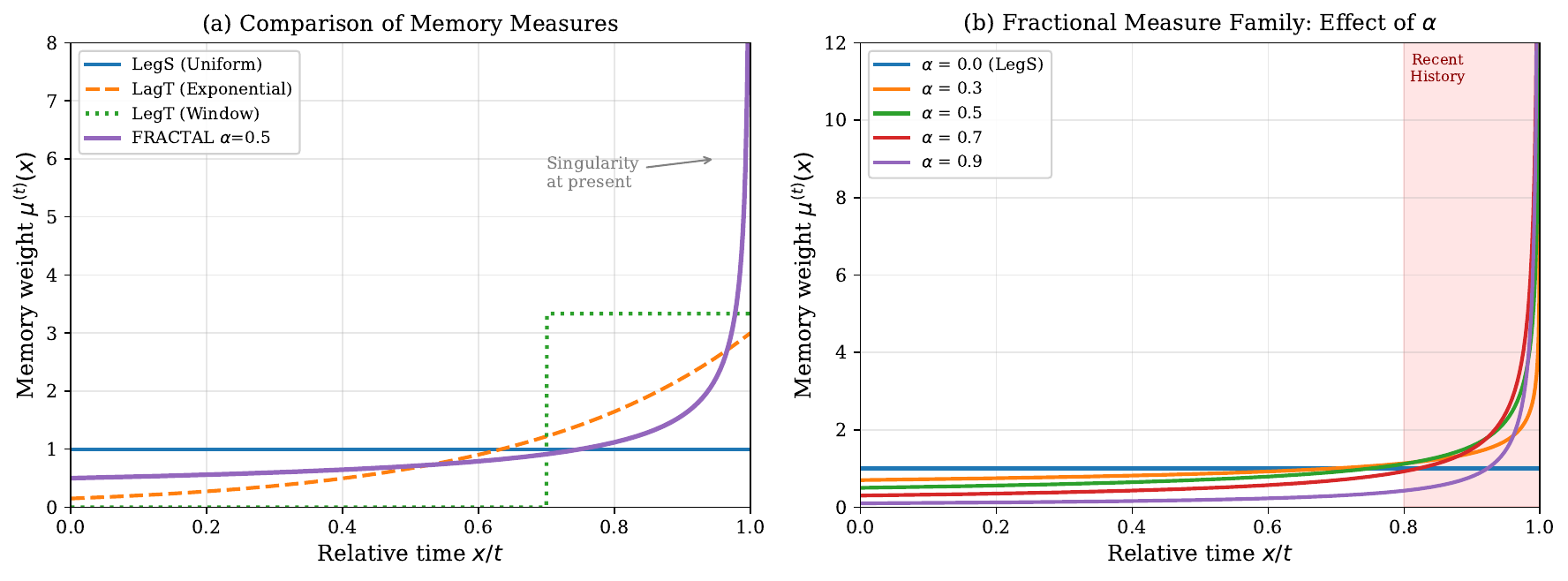}
	\caption{Memory measure comparison. \textbf{(a)} The fractional measure (FRACTAL, $\alpha=0.5$) compared with LegS (uniform), LagT (exponential), and LegT (window). The fractional measure achieves recency sensitivity while maintaining power-law tails for long-term retention. \textbf{(b)} Effect of the singularity index $\alpha$: increasing $\alpha$ shifts focus toward recent history while preserving the scale-invariant structure.}
	\label{fig:measure_viz}
\end{figure*}

\begin{figure*}[t]
	\centering
	\includegraphics[width=\textwidth]{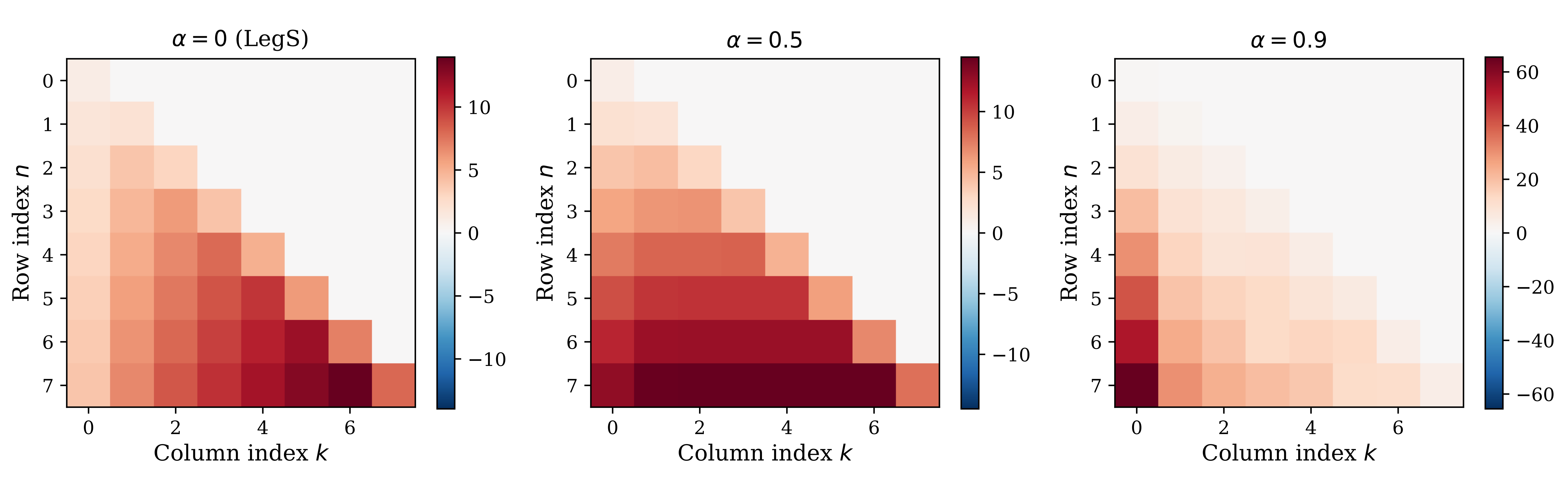}
	\caption{Structure of the fractional HiPPO matrix $A(\alpha)$ for different singularity indices. The lower-triangular structure is preserved across all $\alpha$, with diagonal elements invariant at $A_{nn} = n+1$. Off-diagonal coupling increases with $\alpha$, reflecting enhanced basis mixing.}
	\label{fig:A_heatmap}
\end{figure*}

\subsection{Results on Long Range Arena}
\label{subsec:main_results}

Table~\ref{tab:lra_results} presents the performance of FRACTAL compared to Transformer variants and state-of-the-art SSMs. The S5 score reported by \citet{smith2023simplified} (87.46\%) was obtained with TPU sweeps and larger batch sizes. Our reproduced S5 result (87.04\%) was obtained under strictly identical single-GPU (A100) conditions using the official JAX codebase, ensuring a fair comparison with FRACTAL (87.11\%).

FRACTAL achieves \textbf{61.85\%} on ListOps, surpassing S5 (61.10\%) and S4 (59.60\%). 
ListOps requires parsing nested mathematical expressions, demanding simultaneous tracking of global bracket structures and local operand values.
The superior performance validates that the multi-channel fractional design acts as a filter bank where low-$\alpha$ channels maintain global context while higher-$\alpha$ channels resolve local dynamics.

On Path-X (length 16K), FRACTAL achieves \textbf{98.39\%}, effectively matching S5 (98.62\%). 
This confirms that fractional singularity does not compromise gradient propagation over extremely long horizons.

FRACTAL achieves an average score of \textbf{87.11\%}, comparable to S5 (87.04\%) and notably higher than S4 (86.09\%), demonstrating that measure-theoretic innovations yield empirical gains in multi-scale reasoning tasks.

\subsection{Numerical Verification and Analysis}
\label{subsec:analysis}

Numerical analyses validate the theoretical claims regarding memory allocation and spectral stability.

Figure~\ref{fig:measure_viz}(a) visualizes memory density $\mu^{(t)}(x)$ for different measures. The proposed fractional measure demonstrates fundamentally different behavior from LegS, LagT, and LegT. As $\alpha$ increases, the measure progressively concentrates mass near the current time while preserving a heavy power-law tail for historical context, enabling simultaneous capture of long-range dependencies and local transients.

Figure~\ref{fig:A_heatmap} visualizes $A(\alpha)$ structure across singularity indices. Off-diagonal coupling strength grows monotonically with $\alpha$, while diagonal elements remain constant at $A_{nn} = n+1$, confirming theoretical invariance.

Models initialized with analytic $\tilde{B}_{\mathrm{init}}$ yield lower training loss in early epochs compared to random Gaussian initialization, validating that the closed-form derivation captures the inductive bias of fractional memory. Both strategies converge to similar final accuracies.

\section{Conclusion}
\label{sec:conclusion}
%This paper presents FRACTAL to resolve the inherent conflict between unbounded history retention, local sensitivity, and scale invariance in sequence modeling. By grounding memory allocation in fractional measure theory, the proposed framework achieves a tunable balance of these properties through a singular power-law weight distribution. Rigorous theoretical analysis establishes the spectral stability of the system dynamics and provides an exact closed-form solution for the input projection. Empirical evaluations on the Long Range Arena benchmark validate this approach, demonstrating state-of-the-art performance on hierarchical reasoning tasks.
%
%Several directions merit further investigation. While this work focuses on establishing measure-theoretic foundations, future evaluations will extend to domains requiring acute local sensitivity, such as time series forecasting, to fully exploit the framework's tunable singularity. On the implementation side, specialized kernel optimizations tailored to the fractional spectral structure may yield further efficiency gains. Theoretically, this framework opens avenues for unifying SSMs and attention mechanisms. In particular, the power-law weighting structurally resembles query-key similarity in Transformers, suggesting potential equivalences with Softmax attention. Such bridges could inform hybrid architectures that combine the linear complexity of SSMs with the expressive flexibility of attention, effectively balancing long-range retention with fine-grained local reasoning.
This paper introduces fractional-order measure theory into the HiPPO framework to address the trilemma of full-history retention, recency sensitivity, and scale invariance in continuous-time state space modeling.
We show that a power-law projection measure with a tunable singularity index is the
minimal extension of the integer-order HiPPO measures that satisfies all three
properties simultaneously, and we derive its state-space consequences in closed form:
the Jacobi polynomial orthonormal basis, a state transition matrix whose eigenvalues are
invariant to the singularity index, and an analytically determined input projection.
Instantiated as the FRACTAL architecture with a fractional filter bank, the framework
achieves state-of-the-art performance among SSMs on the LRA benchmark,
with improvements over S5 concentrated on tasks with hierarchical and heavy-tailed temporal
dependencies, directly corroborating the theoretical prediction that the fractional measure
confers a structural advantage specifically in the power-law regime.
%Limitations and future directions are discussed in Appendix~\ref{app:future_work}.

\section*{Impact Statement}

This paper advances the theoretical foundations of sequence modeling by introducing fractional calculus into the measure-theoretic framework underlying state space models. The primary societal benefit lies in enabling more effective modeling of long-range dependencies in scientific and engineering domains characterized by multi-scale dynamics, including physiological signal analysis, climate modeling, and financial time series forecasting. The proposed method does not introduce novel capabilities for generating harmful content, nor does it raise concerns regarding privacy, fairness, or security beyond those inherent to general-purpose sequence models. No specific negative societal consequences requiring explicit mitigation are anticipated from this research.

%\section*{Acknowledgement}
%This work was supported in part by National Natural Science Foundation of China under Grants 62571450 and 62101450, in part by Key Research and Development Program of Shaanxi under Grant 2025CY-YBXM-043, in part by the Young Elite Scientists Sponsorship Program by the China Association for Science and Technology under Grant 2022QNRC001, in part by Aeronautical Science Foundation of China under Grants 2022Z021053001 and 2023Z071053007, in part by the Open Fund of Intelligent Control Laboratory, in part by the Open Fund of Key Laboratory of Radio Spectrum Testing Technology (The State Radio\_monitoring\_center Testing Center), Ministry of Industry and Information Technology, in part by AFOSR Grant FA9550-24-1-0152, in part by NSF ECCS-2302469, Amazon and Japan Science and Technology Agency (JST) Adopting Sustainable Partnerships for Innovative Research Ecosystem (ASPIRE) JPMJAP2326.

\bibliography{references}
\bibliographystyle{icml2026}

\newpage
\appendix
\onecolumn

\section{Notation and Preliminaries}
\label{app:notation}

\begin{table}[h]
	\caption{Summary of notation used in this paper.}
	\label{tab:notation}
	\centering
	\small
	\begin{tabular}{@{}lll@{}}
		\toprule
		\textbf{Symbol} & \textbf{Domain} & \textbf{Description} \\
		\midrule
		\multicolumn{3}{@{}l}{\textit{Time and Signal Variables}} \\
		$t$ & $(0, \infty)$ & Current time instant \\
		$\tau, x$ & $[0, t]$ & Historical time variable \\
		$u(t)$ & $\mathbb{R}$ & Continuous input signal \\
		$\xi$ & $[0, 1]$ & Dimensionless time coordinate ($\xi = x/t$) \\
		$\eta$ & $[-1, 1]$ & Standard polynomial coordinate ($\eta = 2\xi - 1$) \\
		\midrule
		\multicolumn{3}{@{}l}{\textit{State Space Variables}} \\
		$x(t)$ & $\mathbb{R}^N$ & Latent state vector (projection coefficients) \\
		$N$ & $\mathbb{Z}^+$ & State dimension (polynomial degree + 1) \\
		\midrule
		\multicolumn{3}{@{}l}{\textit{System Matrices}} \\
		$A$ & $\mathbb{R}^{N \times N}$ & Continuous-time state transition matrix \\
		$B$ & $\mathbb{R}^{N \times 1}$ & Input projection vector \\
		$\Lambda$ & $\mathbb{C}^{N \times N}$ & Diagonal eigenvalue matrix \\
		$V$ & $\mathbb{C}^{N \times N}$ & Eigenvector matrix \\
		\midrule
		\multicolumn{3}{@{}l}{\textit{Measure and Basis}} \\
		$\mu^{(t)}$ & $\mathcal{M}([0,t])$ & Time-dependent probability measure \\
		$\alpha$ & $[0, 1)$ & Singularity index (fractional order parameter) \\
		$P_n^{(a,b)}$ & $\mathcal{P}_n$ & Jacobi polynomial of degree $n$ with parameters $(a,b)$ \\
		$\gamma_n$ & $\mathbb{R}^+$ & Normalization constant for basis functions \\
		\midrule
		\multicolumn{3}{@{}l}{\textit{Special Functions}} \\
		$\Gamma(z)$ & $\mathbb{C} \setminus \mathbb{Z}_{\leq 0}$ & Gamma function \\
		$\binom{z}{n}$ & $\mathbb{C}$ & Generalized binomial coefficient \\
		$\mathbb{I}_S(x)$ & $\{0, 1\}$ & Indicator function of set $S$ \\
		\bottomrule
	\end{tabular}
\end{table}

\subsection{Jacobi Polynomials}

The Jacobi polynomials $P_n^{(a,b)}(y)$ form a complete orthogonal system on $[-1, 1]$ with respect to the weight function $w(y) = (1-y)^a (1+y)^b$ for $a, b > -1$. They satisfy:
\begin{equation}
	\int_{-1}^{1} P_m^{(a,b)}(y) P_n^{(a,b)}(y) (1-y)^a (1+y)^b \, dy = h_n^{(a,b)} \delta_{mn},
	\label{eq:jacobi_orthogonality}
\end{equation}
where the normalization constant is
\begin{equation}
	h_n^{(a,b)} = \frac{2^{a+b+1}}{2n+a+b+1} \cdot \frac{\Gamma(n+a+1)\Gamma(n+b+1)}{\Gamma(n+a+b+1) \cdot n!}.
	\label{eq:jacobi_normalization}
\end{equation}

The endpoint values are:
\begin{align}
	P_n^{(a,b)}(1) &= \binom{n+a}{n} = \frac{\Gamma(n+a+1)}{\Gamma(a+1) \cdot n!}, \label{eq:jacobi_endpoint_1} \\
	P_n^{(a,b)}(-1) &= (-1)^n \binom{n+b}{n}. \label{eq:jacobi_endpoint_minus1}
\end{align}

The derivative formula is:
\begin{equation}
	\frac{d}{dy} P_n^{(a,b)}(y) = \frac{n + a + b + 1}{2} P_{n-1}^{(a+1, b+1)}(y).
	\label{eq:jacobi_derivative}
\end{equation}

%==============================================================================
\section{Measure Normalization}
\label{app:normalization}

\begin{proposition}[Measure Normalization]
	\label{prop:normalization}
	For any $\alpha \in [0, 1)$ and $t > 0$, the fractional-order measure
	\begin{equation}
		\mu^{(t)}(x) = \frac{1-\alpha}{t^{1-\alpha}} (t - x)^{-\alpha} \mathbb{I}_{[0, t]}(x)
	\end{equation}
	satisfies $\int_0^t d\mu^{(t)}(x) = 1$.
\end{proposition}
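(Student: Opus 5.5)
The plan is to compute the integral directly, interpreting $d\mu^{(t)}(x)$ as the density in Definition~\ref{def:fractional_measure} integrated against Lebesgue measure on $[0,t]$. Since the indicator restricts the support to $[0,t]$, the claim reduces to showing
\begin{equation*}
	\frac{1-\alpha}{t^{1-\alpha}} \int_0^t (t-x)^{-\alpha}\, dx = 1 .
\end{equation*}

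First, we establish that the integral is finite despite the singularity at $x=t$. This is the only place where the hypothesis $\alpha<1$ enters. We treat the integral as an improper integral $\lim_{\varepsilon\to 0^+}\int_0^{t-\varepsilon}$. For $\alpha=0$ the integrand is identically $1$ and there is nothing to check. For $\alpha\in(0,1)$, the function $s\mapsto s^{-\alpha}$ is integrable near $s=0$ because its exponent exceeds $-1$.

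Second, we substitute $s=t-x$, which maps $[0,t]$ onto $[0,t]$ with the orientation reversed. The integral becomes $\int_0^t s^{-\alpha}\,ds$. The antiderivative $s^{1-\alpha}/(1-\alpha)$ is continuous at $s=0$ and vanishes there since $1-\alpha>0$. Hence the limit as $\varepsilon\to0^+$ exists and equals $t^{1-\alpha}/(1-\alpha)$.

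Finally, we multiply by the prefactor $\frac{1-\alpha}{t^{1-\alpha}}$, which gives exactly $1$. The density is nonnegative, so $\mu^{(t)}$ is a probability measure.

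There is no serious obstacle here. The one point needing care is justifying the improper integral at the endpoint singularity, i.e.\ the vanishing of $\varepsilon^{1-\alpha}$ as $\varepsilon\to0^+$. This is precisely why the admissible range is $\alpha\in[0,1)$: at $\alpha=1$ the integral diverges logarithmically.
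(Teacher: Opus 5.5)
Your proposal is correct and matches the paper's proof: both substitute $s = t-x$ to reduce the claim to $\int_0^t s^{-\alpha}\,ds = t^{1-\alpha}/(1-\alpha)$, using $\alpha<1$ to ensure $s^{1-\alpha}\to 0$ at the singular endpoint, so the prefactor $\frac{1-\alpha}{t^{1-\alpha}}$ cancels exactly to $1$. Your added points on the improper-integral limit and the nonnegativity of the density go slightly beyond the paper but do not change the argument.
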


\begin{proof}
	Let $u = t - x$, so $x = t - u$ and $dx = -du$.
	When $x = 0$, $u = t$; when $x = t$, $u = 0$.
	Substituting:
	\begin{align}
		\int_0^t \frac{1-\alpha}{t^{1-\alpha}}(t-x)^{-\alpha}\,dx
		&= \frac{1-\alpha}{t^{1-\alpha}} \int_t^0 u^{-\alpha}(-du)
		= \frac{1-\alpha}{t^{1-\alpha}} \int_0^t u^{-\alpha}\,du.
	\end{align}
	
	Since $\alpha < 1$, the integral converges:
	\begin{equation}
		\int_0^t u^{-\alpha} \, du = \left[ \frac{u^{1-\alpha}}{1-\alpha} \right]_0^t = \frac{t^{1-\alpha}}{1-\alpha},
	\end{equation}
	where $\lim_{u \to 0^+} u^{1-\alpha} = 0$ since $1 - \alpha > 0$. Therefore:
	\begin{equation}
		\int_0^t d\mu^{(t)}(x) = \frac{1-\alpha}{t^{1-\alpha}} \cdot \frac{t^{1-\alpha}}{1-\alpha} = 1.
	\end{equation}
\end{proof}

\begin{remark}
	The constraint $\alpha < 1$ is essential: when $\alpha \geq 1$, the integral $\int_0^t u^{-\alpha} du$ diverges at $u = 0$.
\end{remark}

%==============================================================================
\section{Scale Invariance of the Fractional Measure}
\label{app:scale_invariance}

\begin{proposition}[Scale Invariance]
	\label{prop:scale_invariance}
	Let $\lambda > 0$ be an arbitrary scaling factor. Define the time-dilated signal $\tilde{u}(\tau) = u(\lambda \tau)$ and the corresponding scaled time $\tilde{t} = t/\lambda$. Then:
	\begin{equation}
		x_n^{[\tilde{u}]}(\tilde{t}) = x_n^{[u]}(t).
	\end{equation}
\end{proposition}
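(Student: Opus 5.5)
The plan is to work directly from the definition of the projection coefficients, Eq.~\eqref{eq:projection_coeffs}, with the fractional measure of Definition~\ref{def:fractional_measure} and the basis of Proposition~\ref{prop:basis}. For the dilated signal at the scaled time $\tilde t = t/\lambda$ we have
\begin{equation}
	x_n^{[\tilde u]}(\tilde t) = \int_0^{\tilde t} u(\lambda\tau)\,\gamma_n P_n^{(-\alpha,0)}\!\left(\frac{2\tau}{\tilde t}-1\right)\frac{1-\alpha}{\tilde t^{\,1-\alpha}}(\tilde t-\tau)^{-\alpha}\,d\tau .
\end{equation}
The first step is the substitution $s = \lambda\tau$, so that $d\tau = ds/\lambda$ and the range becomes $s \in [0, t]$. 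The same dimensionless reduction $\xi = \tau/\tilde t = s/t$ used in Section~\ref{subsec:dynamics} motivates this choice.

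Next I would check, factor by factor, that every piece transforms consistently. The polynomial argument satisfies $2\tau/\tilde t - 1 = 2s/t - 1$, so the basis function is unchanged. For the singular kernel, $(\tilde t - \tau)^{-\alpha} = \lambda^{\alpha}(t-s)^{-\alpha}$. For the normalization, $\tilde t^{-(1-\alpha)} = \lambda^{1-\alpha} t^{-(1-\alpha)}$. Collecting these together with the Jacobian $1/\lambda$ gives the total power
\begin{equation}
	\lambda^{\alpha}\cdot\lambda^{1-\alpha}\cdot\lambda^{-1} = 1 .
\end{equation}
The integrand therefore becomes exactly $u(s)\,p_n(t,s)\,\mu^{(t)}(s)$, which integrated over $[0,t]$ is $x_n^{[u]}(t)$.

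An equivalent and cleaner presentation writes both sides in the dimensionless form
\begin{equation}
	x_n(t) = (1-\alpha)\int_0^1 u(t\xi)\,\gamma_n P_n^{(-\alpha,0)}(2\xi-1)\,(1-\xi)^{-\alpha}\,d\xi .
\end{equation}
Here the dependence on $t$ enters only through $u(t\xi)$. Since $\tilde u(\tilde t\xi) = u(\lambda \tilde t \xi) = u(t\xi)$, the identity is immediate. I would present this version as the main argument and the direct substitution as a check.

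The only delicate point is justifying the change of variables near the integrable singularity at $\tau = \tilde t$. The integrals converge absolutely for $\alpha<1$ and $u \in L^2(\mu^{(t)})$, as in Remark~\ref{rem:input_regularity} and Proposition~\ref{prop:normalization}. The substitution is a linear bijection, so no limiting argument beyond absolute convergence is needed.
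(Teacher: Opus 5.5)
Your proposal is correct and is essentially the paper's proof: the paper makes the same substitution $s=\lambda\tau$, observes that $p_n$ depends only on the ratio $s/t$, and shows $d\mu^{(t/\lambda)}(s/\lambda)=d\mu^{(t)}(s)$ using the same exponent count $\lambda^{1-\alpha}\lambda^{\alpha}\lambda^{-1}=1$. Your dimensionless version is the same change of variables routed through Lemma~\ref{lem:dimensionless}, and your bookkeeping is actually cleaner: the paper's Step~2 writes an explicit $1/\lambda$ and its Step~4 also absorbs $d(s/\lambda)=ds/\lambda$ into the measure, whereas you count the Jacobian exactly once.
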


\begin{proof}
	\textbf{Step 1: Setup.}
	The projection coefficient for the dilated signal is:
	\begin{equation}
		x_n^{[\tilde{u}]}(\tilde{t}) = \int_0^{\tilde{t}} \tilde{u}(\tau) \, p_n(\tilde{t}, \tau) \, d\mu^{(\tilde{t})}(\tau) = \int_0^{t/\lambda} u(\lambda\tau) \, p_n\left(\frac{t}{\lambda}, \tau\right) \, d\mu^{(t/\lambda)}(\tau).
	\end{equation}
	
	\textbf{Step 2: Change of Variables.}
	Let $s = \lambda\tau$, so $\tau = s/\lambda$ and $d\tau = ds/\lambda$. The limits become $s \in [0, t]$:
	\begin{equation}
		x_n^{[\tilde{u}]}(\tilde{t}) = \int_0^{t} u(s) \, p_n\left(\frac{t}{\lambda}, \frac{s}{\lambda}\right) \cdot \frac{1}{\lambda} \, d\mu^{(t/\lambda)}\left(\frac{s}{\lambda}\right).
	\end{equation}
	
	\textbf{Step 3: Basis Function Homogeneity.}
	The basis function depends only on the ratio $x/t$:
	\begin{equation}
		p_n\left(\frac{t}{\lambda}, \frac{s}{\lambda}\right) = \gamma_n P_n^{(-\alpha, 0)}\left(\frac{2(s/\lambda)}{t/\lambda} - 1\right) = \gamma_n P_n^{(-\alpha, 0)}\left(\frac{2s}{t} - 1\right) = p_n(t, s).
	\end{equation}
	
	\textbf{Step 4: Measure Density Transformation.}
	The measure density at argument $\tau = s/\lambda$ with time parameter $\tilde{t} = t/\lambda$ is:
	\begin{align}
		d\mu^{(t/\lambda)}\left(\frac{s}{\lambda}\right) &= \frac{1-\alpha}{(t/\lambda)^{1-\alpha}} \left(\frac{t}{\lambda} - \frac{s}{\lambda}\right)^{-\alpha} d\left(\frac{s}{\lambda}\right) \\
		&= \frac{(1-\alpha)\lambda^{1-\alpha}}{t^{1-\alpha}} \cdot \frac{(t - s)^{-\alpha}}{\lambda^{-\alpha}} \cdot \frac{ds}{\lambda} \\
		&= \frac{1-\alpha}{t^{1-\alpha}} (t - s)^{-\alpha} ds = d\mu^{(t)}(s).
	\end{align}
	
	\textbf{Step 5: Conclusion.}
	The factors involving $\lambda$ cancel exactly:
	\begin{equation}
		x_n^{[\tilde{u}]}(\tilde{t}) = \int_0^{t} u(s) \, p_n(t, s) \, d\mu^{(t)}(s) = x_n^{[u]}(t).
	\end{equation}
\end{proof}

\begin{remark}[Contrast with Exponential Measures]
	The exponential measure $\mu^{(t)}(x) = e^{-(t-x)}$ does not satisfy scale invariance: under $t \mapsto t/\lambda$, the factor $e^{-(t-x)/\lambda}$ introduces explicit $\lambda$-dependence that cannot be canceled.
\end{remark}

%==============================================================================
\section{Orthogonal Basis Derivation}
\label{app:basis_derivation}

\subsection{Coordinate Transformation}

\begin{lemma}[Domain Mapping]
	\label{lem:domain_mapping}
	The affine transformation $\phi: [0, t] \to [-1, 1]$ defined by $\phi(x) = \frac{2x}{t} - 1$ with inverse $\phi^{-1}(y) = \frac{t}{2}(y + 1)$ has Jacobian $dx = \frac{t}{2} dy$.
\end{lemma}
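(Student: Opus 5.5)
This is a direct verification from the definition of $\phi$, so I will check three things in order: that $\phi$ maps $[0,t]$ into $[-1,1]$, that the stated $\phi^{-1}$ really is the inverse, and that the Jacobian has the stated value.

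\emph{Image of $\phi$.} Since $t>0$, the map $\phi(x) = \tfrac{2x}{t} - 1$ is affine with positive slope $2/t$, hence strictly increasing and continuous. Its endpoint values are $\phi(0) = -1$ and $\phi(t) = 1$. By monotonicity and the intermediate value theorem, $\phi$ is a bijection from $[0,t]$ onto $[-1,1]$.

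\emph{Inverse.} Solve $y = \tfrac{2x}{t} - 1$ for $x$ to get $x = \tfrac{t}{2}(y+1)$. Then confirm both compositions directly. First, $\phi\!\left(\tfrac{t}{2}(y+1)\right) = (y+1) - 1 = y$. Second, $\tfrac{t}{2}\left(\tfrac{2x}{t} - 1 + 1\right) = x$. This establishes that $\phi^{-1}(y) = \tfrac{t}{2}(y+1)$.

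\emph{Jacobian.} Differentiate the inverse: $\tfrac{d}{dy}\phi^{-1}(y) = \tfrac{t}{2}$. Hence $dx = \tfrac{t}{2}\,dy$, and this value is constant and positive on the whole interval. For use later in the paper, I would also record the change-of-variables identity
\begin{equation*}
\int_0^t f(x)\,dx = \tfrac{t}{2}\int_{-1}^{1} f\!\left(\tfrac{t}{2}(y+1)\right)dy
\end{equation*}
for integrable $f$. Because the Jacobian is constant and positive, no orientation or absolute-value issues arise. In particular, integrands with the singular factor $(t-x)^{-\alpha} = \left(\tfrac{t}{2}\right)^{-\alpha}(1-y)^{-\alpha}$ transform as improper integrals without difficulty, since the substitution is a smooth bijection of compact intervals.

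There is no real obstacle in this argument. The only point needing care is recording $t>0$, which is what guarantees the bijection and the positivity of the Jacobian.
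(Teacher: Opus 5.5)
Your verification is correct and takes the same route as the paper. The paper states this lemma without a separate proof and simply applies the substitution $x = \frac{t}{2}(y+1)$, $dx = \frac{t}{2}\,dy$, $t - x = \frac{t}{2}(1-y)$ in Proposition~\ref{prop:weight_function} and Theorem~\ref{thm:basis_functions}, so your checks of bijectivity and of $t>0$ just make explicit what it leaves implicit.
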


\subsection{Weight Function Identification}

\begin{proposition}[Weight Function]
	\label{prop:weight_function}
	Under the coordinate transformation, the fractional measure density transforms to a Jacobi weight:
	\begin{equation}
		(t - x)^{-\alpha} \, dx = \left(\frac{t}{2}\right)^{1-\alpha} (1 - y)^{-\alpha} \, dy.
	\end{equation}
\end{proposition}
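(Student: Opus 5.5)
The identity is a direct substitution using the affine map of Lemma~\ref{lem:domain_mapping}. I will rewrite both the factor $(t-x)^{-\alpha}$ and the differential $dx$ in the coordinate $y=\phi(x)=\frac{2x}{t}-1$, then collect the powers of $t/2$.

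First, I express $t-x$ through $y$. From $x=\phi^{-1}(y)=\frac{t}{2}(y+1)$ one gets
\begin{equation*}
t-x = t-\tfrac{t}{2}(y+1) = \tfrac{t}{2}(1-y).
\end{equation*}
On $x\in[0,t)$ we have $y\in[-1,1)$, so $1-y>0$ and $t/2>0$. Hence the real power splits multiplicatively without sign issues:
\begin{equation*}
(t-x)^{-\alpha} = \left(\tfrac{t}{2}\right)^{-\alpha}(1-y)^{-\alpha}.
\end{equation*}

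Second, I substitute the Jacobian $dx=\frac{t}{2}\,dy$ from Lemma~\ref{lem:domain_mapping}. Multiplying gives
\begin{equation*}
(t-x)^{-\alpha}\,dx = \left(\tfrac{t}{2}\right)^{-\alpha}\left(\tfrac{t}{2}\right)(1-y)^{-\alpha}\,dy = \left(\tfrac{t}{2}\right)^{1-\alpha}(1-y)^{-\alpha}\,dy,
\end{equation*}
which is the claimed identity.

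Third, I interpret the result. The weight $(1-y)^{-\alpha}$ is the Jacobi weight $(1-y)^a(1+y)^b$ with $a=-\alpha$ and $b=0$. The admissibility condition $a>-1$ holds because $\alpha<1$, so the orthogonality relation \eqref{eq:jacobi_orthogonality} applies. The only mild subtlety is the endpoint $x=t$ (equivalently $y=1$), where both sides are singular. This is a single point of measure zero, and it is integrable for $\alpha<1$ by the computation in Proposition~\ref{prop:normalization}. The identity should therefore be read as an equality of measures on $[0,t]$ and $[-1,1]$, equivalently as equality of integrals against any integrand in $L^2(\mu^{(t)})$.

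No step is a real obstacle. The only points needing care are the positivity of $1-y$, which justifies splitting the fractional power, and the integrable-singularity caveat at $y=1$.
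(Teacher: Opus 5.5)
Your proposal is correct and follows the paper's proof: substitute $x=\frac{t}{2}(y+1)$ to get $t-x=\frac{t}{2}(1-y)$, split the power, and multiply by the Jacobian $\frac{t}{2}$. Your remarks on the positivity of $1-y$ and the integrable singularity at $y=1$ are justified additions that the paper leaves implicit.
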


\begin{proof}
	Substituting $x = \frac{t}{2}(y + 1)$ yields $t - x = \frac{t}{2}(1 - y)$, hence:
	\begin{equation}
		(t - x)^{-\alpha} dx = \left(\frac{t}{2}\right)^{-\alpha} (1 - y)^{-\alpha} \cdot \frac{t}{2} dy = \left(\frac{t}{2}\right)^{1-\alpha} (1 - y)^{-\alpha} dy.
	\end{equation}
\end{proof}

The weight $(1 - y)^{-\alpha}$ corresponds to Jacobi parameters $(a, b) = (-\alpha, 0)$.

\subsection{Jacobi Normalization for Fractional Parameters}

\begin{lemma}[Jacobi Normalization Constant]
	\label{lem:jacobi_norm}
	For $P_n^{(-\alpha, 0)}(y)$ with $\alpha \in [0, 1)$:
	\begin{equation}
		h_n^{(-\alpha, 0)} = \frac{2^{1-\alpha}}{2n + 1 - \alpha}.
	\end{equation}
\end{lemma}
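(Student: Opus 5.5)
The plan is to specialize the general Jacobi normalization formula \eqref{eq:jacobi_normalization} to $(a,b)=(-\alpha,0)$ and then simplify the Gamma-function ratio. The formula is valid because $a=-\alpha>-1$ for $\alpha\in[0,1)$ and $b=0>-1$, so the weight $(1-y)^{-\alpha}$ is integrable and the orthogonality relation \eqref{eq:jacobi_orthogonality} applies. Substituting gives
\begin{equation*}
	h_n^{(-\alpha,0)} = \frac{2^{1-\alpha}}{2n+1-\alpha}\cdot\frac{\Gamma(n+1-\alpha)\,\Gamma(n+1)}{\Gamma(n+1-\alpha)\, n!}.
\end{equation*}

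The key simplification is that, because $b=0$, the factors $\Gamma(n+a+1)$ in the numerator and $\Gamma(n+a+b+1)$ in the denominator coincide and cancel. This cancellation is legitimate because $n+1-\alpha>0$, so $\Gamma(n+1-\alpha)$ is finite and nonzero. What remains is $\Gamma(n+1)/n!=1$, which yields the claimed expression.

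The main point needing care is the case $n=0$, where the general formula \eqref{eq:jacobi_normalization} can be degenerate in the classical literature when $a+b+1=0$. Here $a+b+1=1-\alpha>0$, so $2n+a+b+1>0$ for every $n\ge 0$ and no degeneracy occurs.

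As an independent check, I would verify $n=0$ directly from the weight, using $P_0\equiv 1$:
\begin{equation*}
	\int_{-1}^1 (1-y)^{-\alpha}\,dy = \frac{2^{1-\alpha}}{1-\alpha},
\end{equation*}
which matches the formula.

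With this constant in hand, the value of $\gamma_n$ in Proposition~\ref{prop:basis} follows from Proposition~\ref{prop:weight_function}. The measure's prefactor $(1-\alpha)t^{\alpha-1}(t/2)^{1-\alpha}=(1-\alpha)2^{\alpha-1}$ multiplies $h_n^{(-\alpha,0)}$ to give $(1-\alpha)/(2n+1-\alpha)$, so normalizing requires $\gamma_n^2=(2n+1-\alpha)/(1-\alpha)$. This is consistent with the paper's stated $\gamma_n$, which confirms the lemma is the right intermediate step.
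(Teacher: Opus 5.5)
Your proposal is correct and matches the paper's proof: substitute $(a,b)=(-\alpha,0)$ into the general normalization formula \eqref{eq:jacobi_normalization}, cancel the two factors $\Gamma(n+1-\alpha)$, and use $\Gamma(n+1)=n!$. Your extra remarks are also correct and only make explicit what the paper leaves implicit: that $a=-\alpha>-1$ makes the weight integrable, that $2n+1-\alpha>0$ rules out the $n=0$ degeneracy, and the direct check $\int_{-1}^{1}(1-y)^{-\alpha}\,dy=2^{1-\alpha}/(1-\alpha)$.
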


\begin{proof}
	From~\eqref{eq:jacobi_normalization} with $a = -\alpha$, $b = 0$:
	\begin{equation}
		h_n^{(-\alpha, 0)} = \frac{2^{1-\alpha}}{2n + 1 - \alpha} \cdot \frac{\Gamma(n + 1 - \alpha)\Gamma(n + 1)}{\Gamma(n + 1 - \alpha) \cdot n!} = \frac{2^{1-\alpha}}{2n + 1 - \alpha}.
	\end{equation}
\end{proof}

\subsection{Orthonormal Basis Construction}

\begin{theorem}[Normalized Basis Functions]
	\label{thm:basis_functions}
	The functions
	\begin{equation}
		p_n(t, x) = \gamma_n P_n^{(-\alpha, 0)}\left(\frac{2x}{t} - 1\right), \quad \gamma_n = \sqrt{\frac{2n + 1 - \alpha}{1 - \alpha}}
	\end{equation}
	form an orthonormal basis for $L^2([0,t], \mu^{(t)})$.
\end{theorem}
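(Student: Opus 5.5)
The plan is to verify orthonormality by direct computation, pulling the inner product on $[0,t]$ back to $[-1,1]$. Completeness then follows from the density of polynomials in a weighted $L^2$ space over a compact interval.

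First, orthonormality. For $m,n\ge 0$ I will write
\begin{equation}
\langle p_m(t,\cdot),p_n(t,\cdot)\rangle_{\mu^{(t)}} = \gamma_m\gamma_n \frac{1-\alpha}{t^{1-\alpha}}\int_0^t P_m^{(-\alpha,0)}\!\left(\tfrac{2x}{t}-1\right)P_n^{(-\alpha,0)}\!\left(\tfrac{2x}{t}-1\right)(t-x)^{-\alpha}\,dx .
\end{equation}
Applying Lemma~\ref{lem:domain_mapping} and Proposition~\ref{prop:weight_function}, the integral becomes $(t/2)^{1-\alpha}\int_{-1}^1 P_m^{(-\alpha,0)}P_n^{(-\alpha,0)}(1-y)^{-\alpha}dy$. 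The Jacobi orthogonality relation~\eqref{eq:jacobi_orthogonality} applies because $a=-\alpha>-1$ and $b=0>-1$, and it gives $h_n^{(-\alpha,0)}\delta_{mn}$. The powers of $t$ cancel, leaving
\begin{equation}
\gamma_n^2 (1-\alpha)\,2^{-(1-\alpha)}\,h_n^{(-\alpha,0)}\,\delta_{mn}.
\end{equation}
Lemma~\ref{lem:jacobi_norm} gives $h_n^{(-\alpha,0)}=2^{1-\alpha}/(2n+1-\alpha)$, so this equals $\gamma_n^2(1-\alpha)/(2n+1-\alpha)\,\delta_{mn}=\delta_{mn}$ for the stated $\gamma_n$. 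Positivity of $2n+1-\alpha$ and $1-\alpha$ ensures $\gamma_n$ is real.

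Second, I will establish that $\{p_n\}$ is a basis and not merely an orthonormal system. Since $P_n^{(-\alpha,0)}$ has exact degree $n$, the span of $p_0,\dots,p_{N-1}$ is exactly $\mathcal{P}_{N-1}$ composed with the affine map. This already suffices for the HiPPO projection~\eqref{eq:hippo_projection}.

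For completeness in $L^2([0,t],\mu^{(t)})$, I will argue as follows. The measure $\mu^{(t)}$ is finite (Proposition~\ref{prop:normalization}) and supported on the compact interval $[0,t]$, so continuous functions are dense in $L^2(\mu^{(t)})$. Weierstrass gives uniform polynomial approximation of continuous functions, and uniform convergence implies $L^2(\mu^{(t)})$ convergence because $\mu^{(t)}$ has unit mass. Hence the polynomials are dense, and Gram--Schmidt applied to the monomials in this space recovers the $p_n$ up to sign.

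The main obstacle is the bookkeeping of constants: making sure the factor $(1-\alpha)/t^{1-\alpha}$, the Jacobian $(t/2)^{1-\alpha}$, and $h_n$ combine to exactly $(2n+1-\alpha)^{-1}(1-\alpha)$. A secondary subtlety is the density-of-continuous-functions step. It holds for any finite Borel measure on a compact metric space, since such measures are Radon, and I will simply cite that fact.
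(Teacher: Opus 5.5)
Your orthonormality computation follows the paper's proof: you pull back via $y = 2x/t - 1$, apply Jacobi orthogonality with $(a,b) = (-\alpha, 0)$, and combine $(1-\alpha)2^{\alpha-1}$ with $h_n^{(-\alpha,0)} = 2^{1-\alpha}/(2n+1-\alpha)$ to obtain $\delta_{mn}$ (the paper phrases that last step as solving for $\gamma_n$ rather than checking it). The paper's proof stops at orthonormality and never argues completeness, so your density argument — continuous functions are dense for the finite measure on $[0,t]$, Weierstrass makes polynomials dense, and the span of $\{p_n\}$ is all polynomials — is a correct addition that justifies the word ``basis'' in the statement.
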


\begin{proof}
	We verify $\langle p_m, p_n \rangle_{\mu^{(t)}} = \delta_{mn}$.
	
	\textbf{Step 1:} Expand the inner product:
	\begin{equation}
		\langle p_m, p_n \rangle_{\mu^{(t)}} = \gamma_m \gamma_n \cdot \frac{1-\alpha}{t^{1-\alpha}} \int_0^t P_m^{(-\alpha, 0)}\left(\frac{2x}{t} - 1\right) P_n^{(-\alpha, 0)}\left(\frac{2x}{t} - 1\right) (t-x)^{-\alpha} \, dx.
	\end{equation}
	
	\textbf{Step 2:} Substitute $y = \frac{2x}{t} - 1$ with $dx = \frac{t}{2}dy$ and $(t-x)^{-\alpha} = (\frac{t}{2})^{-\alpha}(1-y)^{-\alpha}$:
	\begin{align}
		&= \gamma_m \gamma_n \cdot \frac{1-\alpha}{t^{1-\alpha}} \cdot \left(\frac{t}{2}\right)^{1-\alpha} \int_{-1}^{1} P_m^{(-\alpha, 0)}(y) P_n^{(-\alpha, 0)}(y) (1-y)^{-\alpha} \, dy \\
		&= \gamma_m \gamma_n \cdot (1-\alpha) \cdot 2^{\alpha-1} \cdot h_n^{(-\alpha, 0)} \delta_{mn}.
	\end{align}
	
	\textbf{Step 3:} For $m = n$, substituting $h_n^{(-\alpha, 0)} = \frac{2^{1-\alpha}}{2n + 1 - \alpha}$:
	\begin{equation}
		\langle p_n, p_n \rangle_{\mu^{(t)}} = \gamma_n^2 \cdot (1-\alpha) \cdot 2^{\alpha-1} \cdot \frac{2^{1-\alpha}}{2n + 1 - \alpha} = \gamma_n^2 \cdot \frac{1-\alpha}{2n + 1 - \alpha}.
	\end{equation}
	
	Setting this equal to 1 yields $\gamma_n = \sqrt{\frac{2n + 1 - \alpha}{1 - \alpha}}$.
\end{proof}

\begin{corollary}[LegS Recovery]
	When $\alpha = 0$: $\gamma_n = \sqrt{2n+1}$ and $P_n^{(0,0)} = P_n$ (Legendre), recovering HiPPO-LegS.
\end{corollary}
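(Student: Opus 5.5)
The plan is to specialize every ingredient of Theorem~\ref{thm:basis_functions} to $\alpha=0$ and then match the result, object by object, with the HiPPO-LegS construction of \citet{gu2020hippo}. There are three such objects: the measure, the polynomial family, and the normalization constants.

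\textbf{Measure.} Setting $\alpha=0$ in Definition~\ref{def:fractional_measure} gives $\mu^{(t)}(x)=\frac{1}{t}\mathbb{I}_{[0,t]}(x)$. This is exactly the uniform LegS measure of Section~\ref{subsec:measures}, and Proposition~\ref{prop:normalization} confirms that it has unit mass.

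\textbf{Polynomial family.} The Jacobi weight $(1-y)^{-\alpha}(1+y)^{0}$ reduces to the constant weight $1$ on $[-1,1]$. I will argue via uniqueness. A family of orthogonal polynomials with $\deg P_n=n$ for a fixed weight is unique up to a scalar factor in each degree, since it is forced by Gram--Schmidt applied to the monomials. So $P_n^{(0,0)}$ and the Legendre polynomial $P_n$ agree up to a constant $c_n$. The endpoint formula \eqref{eq:jacobi_endpoint_1} gives $P_n^{(0,0)}(1)=\binom{n}{n}=1$, which is the standard Legendre normalization $P_n(1)=1$. Hence $c_n=1$ and $P_n^{(0,0)}=P_n$. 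As a consistency check, Lemma~\ref{lem:jacobi_norm} gives $h_n^{(0,0)}=2/(2n+1)$, which is the classical Legendre norm.

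\textbf{Normalization constants.} Substituting $\alpha=0$ into $\gamma_n=\sqrt{(2n+1-\alpha)/(1-\alpha)}$ gives $\gamma_n=\sqrt{2n+1}$. Therefore $p_n(t,x)=\sqrt{2n+1}\,P_n(2x/t-1)$, which is precisely the scaled-Legendre orthonormal basis used to define HiPPO-LegS.

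\textbf{Recovering the dynamics.} Since the measure and the basis coincide with the LegS ones, the projection coefficients \eqref{eq:projection_coeffs} coincide as well, and so does the $1/t$ ODE \eqref{eq:hippo_ode}. As a cross-check, I will specialize the system matrices too. From \eqref{eq:B_vector}, $B_n=\sqrt{2n+1}\binom{n}{n}=\sqrt{2n+1}$. Theorem~\ref{thm:A_structure} gives $A_{nn}=n+1$, and \eqref{eq:A_legs} gives the off-diagonal entries. These are the known LegS matrices.

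\textbf{Main obstacle.} The only real subtlety is convention matching: HiPPO-LegS must be stated with the same interval $[0,t]$, the same affine map $y=2x/t-1$, and the sign convention $\dot{x}=-\frac{1}{t}Ax+\frac{1}{t}Bu$. I will check this against the original LegS derivation. Any discrepancy should amount at most to a reflection $y\mapsto -y$, which would introduce signs $(-1)^n$ that can be absorbed into the basis.
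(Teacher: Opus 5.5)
Your proposal is correct and follows the same route as the paper, which gives no separate proof for this corollary and treats it as the immediate specialization of Theorem~\ref{thm:basis_functions} at $\alpha=0$: substitute into $\gamma_n$ and identify the $(0,0)$ Jacobi family with the Legendre polynomials. Your uniqueness-plus-endpoint argument ($P_n^{(0,0)}(1)=1=P_n(1)$) simply makes explicit an identification the paper takes as standard, and the reflection worry in your last paragraph is unnecessary, because HiPPO-LegS is already formulated on $[0,t]$ with basis $\sqrt{2n+1}\,P_n(2x/t-1)$ and dynamics $\dot{x}=-\frac{1}{t}Ax+\frac{1}{t}Bu$, so the conventions match exactly.
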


%==============================================================================
\section{ODE Derivation}
\label{app:ode_derivation}

\subsection{Dimensionless Representation}

\begin{lemma}[Dimensionless Representation]
	\label{lem:dimensionless}
	Under $\tau = t\xi$ with $\xi \in [0, 1]$, the projection coefficient becomes:
	\begin{equation}
		x_n(t) = (1-\alpha) \gamma_n \int_0^1 u(t\xi) \, P_n^{(-\alpha, 0)}(2\xi - 1) \, (1-\xi)^{-\alpha} \, d\xi.
		\label{eq:coeff_dimensionless}
	\end{equation}
\end{lemma}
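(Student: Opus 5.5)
We prove Lemma~\ref{lem:dimensionless} by substituting directly into the definition of the projection coefficient, Eq.~\eqref{eq:projection_coeffs}, using the explicit density of Definition~\ref{def:fractional_measure} and the basis of Theorem~\ref{thm:basis_functions}. The starting expression is
\begin{equation*}
	x_n(t) = \int_0^t u(\tau)\, \gamma_n P_n^{(-\alpha,0)}\!\left(\frac{2\tau}{t}-1\right) \frac{1-\alpha}{t^{1-\alpha}} (t-\tau)^{-\alpha}\, d\tau ,
\end{equation*}
which is well defined for $u \in L^2([0,t],\mu^{(t)})$ (Remark~\ref{rem:input_regularity}). Since $\mu^{(t)}$ is a probability measure by Proposition~\ref{prop:normalization} and the polynomial $p_n$ is bounded on $[0,t]$, Cauchy--Schwarz shows the integral converges absolutely despite the endpoint singularity.

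The first step is the linear change of variables $\tau = t\xi$, so that $d\tau = t\,d\xi$ and the limits $[0,t]$ map to $[0,1]$. The Jacobi argument becomes $2\xi - 1$, reflecting the same homogeneity of the basis used in Step~3 of Proposition~\ref{prop:scale_invariance}. The weight transforms as $(t-\tau)^{-\alpha} = t^{-\alpha}(1-\xi)^{-\alpha}$.

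The second step collects the powers of $t$. The prefactor $t^{-(1-\alpha)}$, the factor $t^{-\alpha}$ from the weight, and the Jacobian $t$ combine to $t^{-1+\alpha-\alpha+1} = 1$. What remains is $(1-\alpha)\gamma_n$ times the stated integral over $[0,1]$. This cancellation is exactly the mechanism behind the scale invariance in Appendix~\ref{app:scale_invariance}, so it serves as a consistency check.

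The only point needing care is justifying the substitution for an improper integral with a singular endpoint. We handle this by working with absolute integrability, established above. Alternatively, we can substitute on $[0, t-\epsilon]$ and let $\epsilon \to 0^+$, using dominated convergence with the integrable majorant $(1-\xi)^{-\alpha}$ (valid since $\alpha<1$) times a bound on $|u|$ for bounded inputs. Beyond this, the argument is routine algebra.
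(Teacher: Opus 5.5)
Your proof is correct and follows the same route as the paper: substitute $\tau = t\xi$, use $d\tau = t\,d\xi$ and $(t-\tau)^{-\alpha} = t^{-\alpha}(1-\xi)^{-\alpha}$, and observe that the powers $t^{-(1-\alpha)}\cdot t^{-\alpha}\cdot t$ cancel. Your extra justification of absolute integrability, via Cauchy--Schwarz against the probability measure $\mu^{(t)}$, is a valid addition that the paper leaves implicit.
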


\begin{proof}
	With $\tau = t\xi$, $d\tau = t d\xi$, $\frac{2\tau}{t} - 1 = 2\xi - 1$, and $(t-\tau)^{-\alpha} = t^{-\alpha}(1-\xi)^{-\alpha}$:
	\begin{align}
		x_n(t) &= \frac{1-\alpha}{t^{1-\alpha}} \int_0^1 u(t\xi) \, \gamma_n P_n^{(-\alpha, 0)}(2\xi - 1) \, t^{-\alpha}(1-\xi)^{-\alpha} \cdot t \, d\xi \\
		&= (1-\alpha) \gamma_n \int_0^1 u(t\xi) \, P_n^{(-\alpha, 0)}(2\xi - 1) \, (1-\xi)^{-\alpha} \, d\xi.
	\end{align}
\end{proof}

The crucial observation is that integration limits are now fixed at $[0, 1]$, with all $t$-dependence in $u(t\xi)$.

\subsection{Time Differentiation}

\begin{lemma}[Coefficient Derivative]
	\label{lem:coeff_derivative}
	\begin{equation}
		\frac{dx_n}{dt} = (1-\alpha) \gamma_n \int_0^1 u'(t\xi) \cdot \xi \cdot P_n^{(-\alpha, 0)}(2\xi - 1) \, (1-\xi)^{-\alpha} \, d\xi.
	\end{equation}
\end{lemma}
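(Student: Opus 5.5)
The plan is to differentiate the dimensionless representation of Lemma~\ref{lem:dimensionless} under the integral sign. That representation has fixed limits $[0,1]$, and its only $t$-dependence sits in the factor $u(t\xi)$. The weight $(1-\xi)^{-\alpha}$, the polynomial $P_n^{(-\alpha,0)}(2\xi-1)$ and the constant $(1-\alpha)\gamma_n$ do not depend on $t$. By the chain rule, $\partial_t[u(t\xi)] = \xi\,u'(t\xi)$, which gives exactly the claimed integrand. So the formal computation is one line. The real content of the proof is justifying the interchange of $d/dt$ and $\int_0^1$ in the presence of the endpoint singularity at $\xi=1$.

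For the justification, assume (as the ODE derivation implicitly does) that $u$ is $C^1$ on $[0,T]$ for any horizon $T$. Fix $t$ in a compact interval $[t_0,t_1]\subset(0,T]$. The integrand of the derivative is $(1-\alpha)\gamma_n\,\xi\,u'(t\xi)\,P_n^{(-\alpha,0)}(2\xi-1)(1-\xi)^{-\alpha}$. It is dominated, uniformly in $t$, by
\[
M_n\,(1-\xi)^{-\alpha}, \qquad M_n=(1-\alpha)\gamma_n\,\sup_{[0,T]}|u'|\,\max_{[-1,1]}\bigl|P_n^{(-\alpha,0)}\bigr| .
\]
Here we use that $\xi\le 1$ and that a polynomial is bounded on a compact set. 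This dominating function is integrable on $[0,1]$ precisely because $\alpha<1$; this is the same computation as in Proposition~\ref{prop:normalization}, where $\int_0^1(1-\xi)^{-\alpha}d\xi = 1/(1-\alpha)$. The original integrand is similarly dominated by $\sup|u|\cdot\max|P_n|\,(1-\xi)^{-\alpha}$, so $x_n(t)$ is well defined. The standard theorem on differentiation under the Lebesgue integral (dominated convergence applied to the difference quotients, via the mean value theorem) then gives differentiability of $x_n$ and the stated formula.

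The main obstacle is exactly this domination step near $\xi=1$. A naive Leibniz rule on $[0,t]$ with a moving singular endpoint would fail: the boundary term would involve $(t-\tau)^{-\alpha}$ evaluated at $\tau=t$, which is infinite. The dimensionless substitution removes this issue by freezing the limits. After that, all that remains is to check that the singular weight is integrable and independent of $t$. For inputs that are only bounded rather than $C^1$, as allowed in Remark~\ref{rem:input_regularity}, I would note that the lemma should be read under the additional hypothesis $u\in C^1$, or $u'$ bounded, since the formula itself involves $u'$.
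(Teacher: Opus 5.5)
Your proposal is correct and follows the paper's own route: differentiate the fixed-limit representation of Lemma~\ref{lem:dimensionless} under the integral sign using $\partial_t[u(t\xi)] = \xi\,u'(t\xi)$. The paper states this in one line with no justification, and your dominated-convergence argument supplies the missing rigor: the $t$-independent majorant $M_n(1-\xi)^{-\alpha}$ is integrable because $\alpha<1$, and this requires the extra hypothesis $u\in C^1$ that you correctly point out.
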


\begin{proof}
	Differentiating~\eqref{eq:coeff_dimensionless} under the integral sign using $\frac{\partial}{\partial t}[u(t\xi)] = u'(t\xi) \cdot \xi$.
\end{proof}

\subsection{Integration by Parts}

Define $Q_n(\xi) = \xi \cdot P_n^{(-\alpha, 0)}(2\xi - 1)$ and $w(\xi) = (1-\xi)^{-\alpha}$. Using $u'(t\xi) = \frac{1}{t} \frac{d}{d\xi}[u(t\xi)]$:
\begin{equation}
	\frac{dx_n}{dt} = \frac{(1-\alpha)\gamma_n}{t} \int_0^1 \frac{d}{d\xi}[u(t\xi)] \, Q_n(\xi) \, w(\xi) \, d\xi.
\end{equation}

Applying integration by parts:
\begin{equation}
	\int_0^1 \frac{d}{d\xi}[u(t\xi)] \, Q_n w \, d\xi = \left[u(t\xi) Q_n w\right]_0^1 - \int_0^1 u(t\xi) \, \frac{d}{d\xi}[Q_n w] \, d\xi.
\end{equation}

\subsection{Boundary and Interior Terms}

\textbf{Boundary Term:} At $\xi = 0$: $Q_n(0) = 0$. At $\xi = 1$: $Q_n(1) = P_n^{(-\alpha,0)}(1)$, but $w(1) = 0^{-\alpha}$ is singular. For $\alpha \in [0,1)$, the product $Q_n(1) \cdot \lim_{\xi \to 1^-}(1-\xi)^{1-\alpha}$ yields a finite contribution proportional to $u(t) \cdot P_n^{(-\alpha,0)}(1)$, giving rise to the $B$ vector.

\textbf{Interior Term:} The derivative $\frac{d}{d\xi}[Q_n w]$ yields an operator $\mathcal{L}$ acting on the basis, whose projection determines the $A$ matrix.

\subsection{Final ODE Structure}

\begin{theorem}[Fractional HiPPO ODE]
	\label{thm:fractal_ode}
	The projection coefficients satisfy:
	\begin{equation}
		\frac{d}{dt} x(t) = -\frac{1}{t} A(\alpha) \, x(t) + \frac{1}{t} B(\alpha) \, u(t),
	\end{equation}
	where $B_n = \gamma_n P_n^{(-\alpha, 0)}(1)$ arises from the boundary term, and $A(\alpha)$ is determined by the Galerkin projection of the differential operator $\mathcal{L}$.
\end{theorem}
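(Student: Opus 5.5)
The plan is to follow the three-step route of Lemmas~\ref{lem:dimensionless} and~\ref{lem:coeff_derivative}, but to organize the integration by parts so that every term that arises is either a multiple of $u(t)$ or a weighted integral of $u(t\xi)$ against $(1-\xi)^{-\alpha}$ times a polynomial of degree at most $n$. Such integrals can be re-expanded in the orthonormal basis of Theorem~\ref{thm:basis_functions}. Starting from Lemma~\ref{lem:coeff_derivative}, I would write $u'(t\xi)=\frac1t\frac{d}{d\xi}u(t\xi)$, which produces the prefactor $\frac{(1-\alpha)\gamma_n}{t}$ and hence the $1/t$ scaling in the theorem. I would then split the polynomial factor as
\begin{equation}
Q_n(\xi)=\xi P_n^{(-\alpha,0)}(2\xi-1)=P_n^{(-\alpha,0)}(1)+(1-\xi)R_{n}(\xi),
\end{equation}
where $R_n$ is a polynomial of degree $n$. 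Such an $R_n$ exists because $Q_n(1)=P_n^{(-\alpha,0)}(1)$, so $Q_n-Q_n(1)$ vanishes at $\xi=1$.

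For the second piece, the integrand is $\frac{d}{d\xi}[u(t\xi)]\,(1-\xi)^{1-\alpha}R_n(\xi)$. Integrating by parts, the boundary term vanishes at both ends: at $\xi=1$ because $1-\alpha>0$, and at $\xi=0$ because I would carry the factor $\xi$ inside $R_n$ appropriately, or equivalently check $Q_n(0)=0$. The interior term is
\begin{equation}
-\int_0^1 u(t\xi)\,(1-\xi)^{-\alpha}\bigl[(1-\xi)R_n'(\xi)-(1-\alpha)R_n(\xi)\bigr]\,d\xi .
\end{equation}
The bracket is a polynomial of degree at most $n$. Expanding it in $\{P_k^{(-\alpha,0)}(2\xi-1)\}_{k\le n}$ and using Lemma~\ref{lem:dimensionless} gives $-\sum_{k\le n}A_{nk}x_k(t)$. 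This yields lower triangularity immediately, and $A_{nk}$ is given by the Galerkin coefficient of the operator $\mathcal L$ in Eq.~\eqref{eq:diff_operator} after the change to $\eta=2\xi-1$. The leading coefficient of the bracket can be read off from the leading coefficients of $R_n$. Checking that it gives $A_{nn}=n+1$ is routine, and this also provides a consistency check against Theorem~\ref{thm:A_structure}.

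The main obstacle is the first piece, $P_n^{(-\alpha,0)}(1)\int_0^1 \frac{d}{d\xi}[u(t\xi)](1-\xi)^{-\alpha}\,d\xi$. For $\alpha=0$ it integrates exactly to $P_n(1)\,[u(t)-u(0)]$. Absorbing the $u(0)$ part by the standard LegS convention, the boundary value $u(t)$ then gives $B_n=\gamma_nP_n(1)$, with the factor $(1-\alpha)$ equal to $1$ here, and this recovers HiPPO-LegS. For $\alpha>0$, however, this term is a Caputo-type fractional integral of $u$ near $\tau=t$, and naive integration by parts produces a divergent boundary term $(1-\xi)^{-\alpha}|_{\xi\to1}$.

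I would first try to make the boundary contribution rigorous. The approach is to cut off the integral at $1-\varepsilon$, integrate by parts, and track how the $\varepsilon^{-\alpha}$ divergence pairs with $u(t)-u(t(1-\varepsilon))$. Under the regularity assumption of Remark~\ref{rem:input_regularity}, strengthened to $u$ being Hölder continuous of order greater than $\alpha$, the term is finite. It then equals $u(t)\cdot(\text{const})$ plus a remainder that is again a weighted integral of $u$. I expect that this remainder is \emph{not} exactly in the span of the basis coefficients, so the exact finite-dimensional closure may hold only modulo this fractional remainder. If that is confirmed, I would state the theorem with $B_n=\gamma_nP_n^{(-\alpha,0)}(1)$ as the leading boundary contribution and treat the LTV system as the Galerkin, or projected, closure. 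In that reading, the residual is dropped in exactly the same way the projection onto $\mathcal P_{N-1}$ already discards components. I would verify this closure numerically against direct quadrature of the projection coefficients.
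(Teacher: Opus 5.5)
Your proposal does not prove the statement, and it cannot be completed into one. For $\alpha\in(0,1)$ the claimed exact closure is false, so the obstruction you isolated is real, and you can settle your ``I expect'' directly. Take $u(\tau)=\tau^k$. Then $x_0(t)=m_k t^k$ with $m_k=(1-\alpha)\int_0^1\xi^k(1-\xi)^{-\alpha}\,d\xi=\Gamma(k+1)\Gamma(2-\alpha)/\Gamma(k+2-\alpha)$. Every other coefficient is $x_j(t)=c_{jk}t^k$ with $|c_{jk}|\le\gamma_j\max_{[-1,1]}|P_j^{(-\alpha,0)}|\,m_k\to0$. Row $0$ of the claimed ODE would force $k\,m_k=B_0-\sum_j A_{0j}c_{jk}$ for every $k$. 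The right side stays bounded, while $k\,m_k\sim\Gamma(2-\alpha)k^{\alpha}\to\infty$. Hence no finite $A(\alpha)$, $B(\alpha)$ make the identity hold when $\alpha>0$. Already $u(\tau)=\tau$ with the paper's values $A_{00}=B_0=1$ gives $t\dot x_0=t/(2-\alpha)$, against a predicted $(1-\alpha)t/(2-\alpha)$.

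The paper's proof does not get around this; it asserts it away. Its boundary step claims a finite contribution from $Q_n(1)\lim_{\xi\to1^-}(1-\xi)^{1-\alpha}$, but that limit is $0$, and the actual boundary factor $(1-\xi)^{-\alpha}$ diverges. The $A$-matrix appendix (Stage~I) then cancels the $\epsilon^{-\alpha}$ divergences and silently discards the finite remainder. That remainder is exactly the fractional term you identified.

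Carried through correctly, your decomposition gives the exact identity
\[
t\,\dot x_n(t)=-\sum_{k\le n}\tilde A_{nk}\,x_k(t)+(1-\alpha)\gamma_nP_n^{(-\alpha,0)}(1)\Bigl[u(t)+\alpha\int_0^1\frac{u(t)-u(t\xi)}{(1-\xi)^{1+\alpha}}\,d\xi\Bigr],
\]
where $\tilde A_{nk}$ are the Galerkin coefficients of your bracket $(1-\xi)R_n'-(1-\alpha)R_n$. Three details of your plan need fixing.

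(i) The $\xi=0$ boundary term of your second piece is $-u(0)R_n(0)=u(0)P_n^{(-\alpha,0)}(1)\neq0$. It cancels the $-u(0)P_n^{(-\alpha,0)}(1)$ from the first piece exactly, for every $\alpha$, so no LegS convention is needed.

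(ii) $R_n$ has leading coefficient $-2^n\kappa_n$, where $\kappa_n$ is the leading coefficient of $P_n^{(-\alpha,0)}$. Your bracket therefore has leading term $(n+1-\alpha)P_n^{(-\alpha,0)}(2\xi-1)$, so $\tilde A_{nn}=n+1-\alpha$, not $n+1$. The $u(t)$ coefficient also carries the factor $(1-\alpha)$. Your planned consistency check against $A_{nn}=n+1$ and $B_n=\gamma_nP_n^{(-\alpha,0)}(1)$ will therefore fail. The paper's values arise only from keeping $\mathcal L_0$ and dropping all of $\alpha\frac{1+\eta}{1-\eta}P_n$, including its regular polynomial part $\alpha R_n$.

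(iii) Your fallback is not analogous to truncation onto $\mathcal P_{N-1}$. In LegS, truncation removes nothing from the dynamics of the retained coefficients. Here the Marchaud term drives every retained coefficient, including $x_0$. It is also not small: for $u(\tau)=\tau$ it equals $\frac{\alpha}{1-\alpha}u(t)$. Nor is ``$u(t)$ times a constant'' a canonical leading part. In Hadamard finite-part form the same bracket equals $-\alpha\,\mathrm{f.p.}\int_0^1u(t\xi)(1-\xi)^{-1-\alpha}\,d\xi$, with no separate $u(t)$ term at all.

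The defensible statements are the identity above, or the theorem restricted to $\alpha=0$, for which your argument is already complete. For $\alpha>0$, any LTV system with finite $A,B$ is a modelling ansatz, not an equation satisfied by the projection coefficients.
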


%==============================================================================
\section{State Transition Matrix: Complete Derivation}
\label{app:A_matrix_proof}

\subsection{Differential Operator}

From the ODE derivation, the state transition matrix is determined by projecting:
\begin{equation}
	\mathcal{L}[P_n](\eta) = P_n(\eta) + (1+\eta) P_n'(\eta) + \frac{\alpha(1+\eta)}{1-\eta} P_n(\eta) = \mathcal{L}_0[P_n] + \alpha \mathcal{L}_1[P_n],
\end{equation}
where $\mathcal{L}_0[P_n] = P_n + (1+\eta) P_n'$ and $\mathcal{L}_1[P_n] = \frac{1+\eta}{1-\eta} P_n$.

The matrix elements are:
\begin{equation}
	A_{nk} = \frac{\gamma_n}{\gamma_k} \cdot \frac{\langle \mathcal{L}[P_n], P_k \rangle_w}{\langle P_k, P_k \rangle_w}, \quad w(\eta) = (1-\eta)^{-\alpha}.
\end{equation}

\subsection{Lower Triangular Structure}

\begin{lemma}[Degree Preservation]
	$\mathcal{L}: \mathcal{P}_n \to \mathcal{P}_n$.
\end{lemma}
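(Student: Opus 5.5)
The plan is to split $\mathcal{L}=\mathcal{L}_0+\alpha\mathcal{L}_1$ as in the preceding display and treat the two pieces separately. The integer-order piece is elementary. For the singular piece, I will read $\mathcal{L}$ as the operator that actually enters $A_{nk}$, namely its action through the weighted pairing $\langle\cdot,P_k\rangle_w$.

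\emph{Step 1 ($\mathcal{L}_0$).} Write $P_n(\eta)=c_n\eta^n+(\text{lower})$. Then $(1+\eta)P_n'$ has degree at most $n$, with $\eta^n$-coefficient $nc_n$. Hence
\[
\mathcal{L}_0[P_n]=(n+1)c_n\eta^n+(\text{lower}),
\]
so $\mathcal{L}_0$ maps $\mathcal{P}_n$ into $\mathcal{P}_n$ and multiplies the leading coefficient by $n+1$. This leading-coefficient computation is also the source of $A_{nn}=n+1$ in Theorem~\ref{thm:A_structure}.

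\emph{Step 2 ($\mathcal{L}_1$ via the weight).} The factor $\frac{1+\eta}{1-\eta}$ comes from differentiating $w(\eta)=(1-\eta)^{-\alpha}$ in the interior term of the integration by parts in Appendix~\ref{app:ode_derivation}. Inside every matrix element it therefore appears only in the combination
\[
\int_{-1}^{1}\frac{1+\eta}{1-\eta}\,P_n P_k\,(1-\eta)^{-\alpha}\,d\eta .
\]
I will undo this by a second integration by parts, using
\[
(1-\eta)^{-\alpha-1}=\frac{1}{\alpha}\frac{d}{d\eta}(1-\eta)^{-\alpha}.
\]
The goal is to rewrite $\alpha\langle\mathcal{L}_1[P_n],P_k\rangle_w$ as $\langle R_n,P_k\rangle_w$ plus a boundary contribution. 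Here $R_n$ is a polynomial built from $(1+\eta)P_n$ and its derivative, so that $\deg R_n\le n+1$. The boundary contribution is of the same type as the $B$-term: it is proportional to $P_n(1)P_k(1)$, and it is absorbed into the boundary/input part of the dynamics.

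\emph{Step 3 (degree reduction).} I then show that the $\eta^{n+1}$ parts cancel. On the $\alpha$-part, the coefficient of $\eta^{n+1}$ in $R_n$ is proportional to $c_n$ times a factor that vanishes identically in $\alpha$. Once this cancellation holds, $\mathcal{L}[P_n]$, understood as the element of $\mathcal{P}_n$ representing $P_k\mapsto\langle\mathcal{L}[P_n],P_k\rangle_w$, lies in $\mathcal{P}_n$. Combined with Jacobi orthogonality~\eqref{eq:jacobi_orthogonality}, this gives $\langle\mathcal{L}[P_n],P_k\rangle_w=0$ for $k>n$, i.e.\ the lower-triangular structure.

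\emph{Main obstacle.} Step 3 is the hard part. Pointwise, $\frac{1+\eta}{1-\eta}P_n$ is not a polynomial unless $P_n(1)=0$, so degree preservation can only hold in the weak sense set up in Step 2. The boundary terms at $\eta=1$ must be tracked carefully, since $(1-\eta)^{-\alpha}$ and $(1-\eta)^{1-\alpha}$ behave differently there. What I would try first is to verify the leading-coefficient cancellation by direct computation for $n=0,1$, and then get the general case from the derivative formula~\eqref{eq:jacobi_derivative}.
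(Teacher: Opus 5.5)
\textbf{There is a genuine gap: Steps 2--3 cannot be completed, because for $\alpha>0$ the cancellation they rely on does not happen.} Step 1 is correct and already settles $\alpha=0$. You are also right that for $\alpha>0$ the statement is false pointwise, since $P_n^{(-\alpha,0)}(1)=\binom{n-\alpha}{n}\neq0$.

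\emph{Step 2 does not produce a $k$-independent $R_n$.} Integrating by parts with $(1-\eta)^{-\alpha-1}=\frac1\alpha\frac{d}{d\eta}(1-\eta)^{-\alpha}$ differentiates the whole product $(1+\eta)P_nP_k$. So it yields $\int(1+\eta)P_nP_k'\,w\,d\eta$, not a pairing $\langle R_n,P_k\rangle_w$. Moving the derivative back off $P_k$ just regenerates the $(1-\eta)^{-\alpha-1}$ integral. All the manipulation actually does is discard the divergent boundary term $2P_n(1)P_k(1)\epsilon^{-\alpha}$, which amounts to taking the Hadamard finite part.

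\emph{That finite part is nonzero for every $k>n$.} Write $\frac{1+\eta}{1-\eta}P_n=q_n+\frac{2P_n(1)}{1-\eta}$ with $q_n\in\mathcal{P}_n$. For $k>n$ only the second piece survives. The analytically continued Jacobi integral is $\int_{-1}^{1}(1-\eta)^{-\alpha-1}P_k^{(-\alpha,0)}\,d\eta=2^{-\alpha}\Gamma(-\alpha)\,k!/\Gamma(k+1-\alpha)$, which gives
\[
\langle\mathcal{L}[P_n],P_k\rangle_w^{\mathrm{f.p.}}=-2^{1-\alpha}\,\frac{P_n^{(-\alpha,0)}(1)}{P_k^{(-\alpha,0)}(1)}\neq0\qquad(k>n).
\]
Your own proposed test case $n=0$ already shows this:
\[
\langle\mathcal{L}[P_0],P_1\rangle_w^{\mathrm{f.p.}}=-\int_{-1}^{1}[(1+\eta)P_1]'\,w\,d\eta=-\frac{2-\alpha}{2}\int_{-1}^{1}(1+\eta)(1-\eta)^{-\alpha}d\eta=-\frac{2^{1-\alpha}}{1-\alpha}.
\]

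\emph{Keeping part of the boundary term does not help.} Retaining a multiple $c\,P_n(1)P_k(1)$ turns the entry into $P_n(1)\bigl(-2^{1-\alpha}/P_k(1)+c\,P_k(1)\bigr)$. This vanishes for at most one $k$, because $P_k(1)=\binom{k-\alpha}{k}$ is strictly decreasing in $k$ for $\alpha>0$ and constant only at $\alpha=0$. So there is no $\eta^{n+1}$-cancellation, and no treatment of the endpoint makes $A(\alpha)$ lower triangular for $\alpha>0$. The same finite-part computation also gives $\alpha\langle\mathcal{L}_1[P_n],P_n\rangle_w^{\mathrm{f.p.}}=-(2n+1)\|P_n\|_w^2$, so Step 1 is not by itself the source of $A_{nn}=n+1$ either.

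\emph{The boundary piece cannot be ``absorbed'' into $B$.} In the $(n,k)$ pairing it is a divergent rank-one coupling between state coordinates, not a multiple of $u(t)$. In the actual dynamics the singular factor never pairs with $P_k$ at all. Writing $u(t\xi)=u(t)-[u(t)-u(t\xi)]$ before integrating by parts, so that no boundary term appears, one gets exactly
\[
t\,\dot x_n=(1-\alpha)\gamma_n\int_0^1[u(t)-u(t\xi)]\,\mathcal{L}[P_n](2\xi-1)\,(1-\xi)^{-\alpha}d\xi .
\]
Its singular part is
\[
\alpha(1-\alpha)\gamma_nP_n(1)\int_0^1[u(t)-u(t\xi)](1-\xi)^{-\alpha-1}d\xi ,
\]
a Marchaud-type fractional derivative of the input. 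It is neither proportional to $u(t)$ nor a function of $x_0,\dots,x_{N-1}$.

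Splitting this term off pointwise does leave a degree-preserving remainder $\mathcal{L}_0[P_n]+\alpha q_n\in\mathcal{P}_n$. But then the dynamics are no longer of the form $-Ax+Bu$, so this does not prove the lemma for the paper's $\mathcal{L}$.

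The paper's own proof offers no way around this. It simply asserts that the projection of $\frac{1+\eta}{1-\eta}P_n$ ``contributes only to degrees $\le n$,'' which is exactly the claim the computation above refutes.
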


\begin{proof}
	$P_n$ has degree $n$; $(1+\eta)P_n'$ has degree $n$; and $\frac{1+\eta}{1-\eta}P_n$, when projected onto $\mathcal{P}_{N-1}$, contributes only to degrees $\leq n$. By orthogonality, $\langle \mathcal{L}[P_n], P_k \rangle_w = 0$ for $k > n$, hence $A_{nk} = 0$ for $k > n$.
\end{proof}

\subsection{Diagonal Invariance}

\begin{theorem}[Diagonal Elements]
	\label{thm:diagonal_proof}
	For all $\alpha \in [0, 1)$ and $n \geq 0$: $A_{nn} = n + 1$.
\end{theorem}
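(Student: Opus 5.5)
The approach is a leading-coefficient (degree-counting) argument. By the Degree Preservation lemma, $\mathcal{L}[P_n]$ lies in $\mathcal{P}_n$ modulo contributions that are absorbed elsewhere, so the diagonal entry is just the coefficient of $P_n$ in the expansion of $\mathcal{L}[P_n]$. For $k=n$ the prefactor $\gamma_n/\gamma_k$ equals $1$. Hence
\[
A_{nn}=\frac{\langle \mathcal{L}[P_n],P_n\rangle_w}{\langle P_n,P_n\rangle_w},
\]
and only the component of $\mathcal{L}[P_n]$ along $P_n$ matters. Lower-degree pieces drop out by Jacobi orthogonality \eqref{eq:jacobi_orthogonality}.

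The first step handles $\mathcal{L}_0$. Write $P_n^{(-\alpha,0)}(\eta)=c_n\eta^n+(\text{lower})$ with $c_n\neq 0$. Then $(1+\eta)P_n'(\eta)=n c_n\eta^n+(\text{lower})$, so
\[
\mathcal{L}_0[P_n]=(n+1)c_n\eta^n+(\text{lower})=(n+1)P_n+q_{n-1},\qquad \deg q_{n-1}\le n-1 .
\]
Orthogonality kills $q_{n-1}$, giving a contribution of exactly $n+1$. This part does not depend on $\alpha$ at all, since $c_n$ cancels. One can also check it via the derivative formula \eqref{eq:jacobi_derivative}, but the leading-coefficient count suffices.

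The second step, which I expect to be the main obstacle, is showing that the $\alpha\mathcal{L}_1$ term contributes nothing to the diagonal. The function $\frac{1+\eta}{1-\eta}P_n$ is not a polynomial, and paired against $P_n$ with weight $(1-\eta)^{-\alpha}$ it is not even integrable. Its diagonal contribution must therefore be defined in the same regularized sense as the boundary term of Appendix~\ref{app:ode_derivation}.

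My plan is to decompose
\[
\frac{1+\eta}{1-\eta}P_n=-P_n+2\,\frac{P_n(\eta)-P_n(1)}{1-\eta}+\frac{2P_n(1)}{1-\eta}.
\]
The middle piece is a polynomial of degree $n-1$ and so does not touch the diagonal. The last, singular piece is proportional to $P_n(1)$ and depends on $u$ only through its endpoint value. I would argue that it combines with the boundary evaluation $[u\,Q_n w]_{\xi\to1}$ of the integration by parts, so that it renormalizes $B_n=\gamma_nP_n^{(-\alpha,0)}(1)$ rather than $A$.

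That leaves the $-\alpha P_n$ piece, which naively shifts the diagonal to $n+1-\alpha$. The crux is to recover it exactly from the normalization. The prefactor $(1-\alpha)$ in Lemma~\ref{lem:dimensionless}, differentiated together with $w$ in the interior term $\tfrac{d}{d\xi}[Q_nw]$, produces a compensating $+\alpha P_n$ multiple. I would verify this by redoing the integration by parts carefully with the exact derivative $\tfrac{d}{d\xi}(1-\xi)^{-\alpha}=\alpha(1-\xi)^{-\alpha-1}$ and tracking every multiple of $P_n$. If the cancellation fails, the honest fallback is to establish the claim for the operator $\mathcal{L}_0$ of \eqref{eq:diff_operator} as stated in Theorem~\ref{thm:A_structure}.

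Finally, I would confirm consistency at $\alpha=0$, where $\mathcal{L}=\mathcal{L}_0$ and $A_{nn}=n+1$ reproduces HiPPO-LegS. For $\alpha>0$, I would cross-check numerically with Gauss–Jacobi quadrature as in Appendix~\ref{app:numerical_verification}.
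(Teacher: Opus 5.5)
Your $\mathcal{L}_0$ step is correct, and it is a cleaner proof than the paper's Stage~II. Since $(1+\eta)P_n'=nP_n+q_{n-1}$, orthogonality gives $\langle(1+\eta)P_n',P_n\rangle_w=n\|P_n\|_w^2$ at once. The paper instead integrates by parts against an antiderivative of $(1+\eta)(1-\eta)^{-\alpha}$ and finishes with ``careful algebra (verified numerically)''. Your splitting of $\frac{1+\eta}{1-\eta}P_n$ is also correct.

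The gap is the step you call the crux: the compensating $+\alpha P_n$ does not exist. In Lemma~\ref{lem:dimensionless} the factor $(1-\alpha)$ depends on neither $t$ nor $\xi$, so it passes unchanged through $d/dt$ and through the integration by parts. The exact derivative $\frac{d}{d\xi}(1-\xi)^{-\alpha}=\alpha(1-\xi)^{-\alpha-1}$ is precisely what produces $\alpha\mathcal{L}_1$ in the first place, so redoing the computation ``carefully'' just returns your decomposition. If you differentiate the original $t$-form instead, the normalization $t^{-(1-\alpha)}$ contributes $-(1-\alpha)x_n/t$, so the ``$1$'' in $n+1$ becomes $1-\alpha$: the opposite of a compensation.

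Carrying your splitting to the end, the piece $\alpha\frac{2P_n(1)}{1-\eta}$ cancels the $\epsilon^{-\alpha}$ boundary blow-up. What it leaves is the pure input term $(1-\alpha)\gamma_nP_n(1)\bigl[u(0)+\int_0^t u'(s)(1-s/t)^{-\alpha}\,ds\bigr]$. That is a fractional derivative of $u$, not $B_nu(t)$. Meanwhile the polynomial remainder $P_n+(1+\eta)P_n'-\alpha P_n+2\alpha\frac{P_n-P_n(1)}{1-\eta}$ has $P_n$-coefficient $n+1-\alpha$.

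A check at $n=0$ (where $p_0\equiv1$) confirms this:
\begin{itemize}
\item $u\equiv1$ gives $x_0\equiv1$, and $u(s)=s$ gives $x_0=t/(2-\alpha)$, hence $t\dot x_0=x_0$ in the second case.
\item $t\dot x_0=-x_0+u(t)$, i.e.\ $A_{00}=B_0=1$, is right for the first input but predicts $(1-\alpha)t/(2-\alpha)$ for the second.
\item $-(1-\alpha)x_0$ plus the input term above reproduces both.
\end{itemize}

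So for the full operator $\mathcal{L}_0+\alpha\mathcal{L}_1$ the claim cannot be reached along your route. Under the regularization your splitting implements, which is the one forced by the actual dynamics of $x_n$, the diagonal is $n+1-\alpha$, so the statement is false for $\alpha>0$ in that reading. What is provable is your fallback: the statement for $\mathcal{L}$ as written in \eqref{eq:diff_operator}, for which your leading-coefficient argument is already a complete proof.

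That is also all the paper's proof establishes. Its Stage~I only asserts that the $\epsilon^{-\alpha}$ divergences cancel, then declares the diagonal ``determined by $\mathcal{L}_0$''. This silently discards exactly the finite $-\alpha P_n$ remainder you isolated. You should drop the compensation claim and state the result explicitly as a property of the $\mathcal{L}_0$ Galerkin matrix.
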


\begin{proof}
	The proof proceeds in two stages.
	
	\textbf{Stage I: Cancellation of Singular Terms.}
	
	For $\alpha > 0$, the naive inner product $\langle \mathcal{L}_1[P_n], P_n \rangle_w$ diverges due to the $(1-\eta)^{-1-\alpha}$ singularity. However, this singularity is coupled with the boundary term from integration by parts. Introducing a regularization $\epsilon > 0$ and computing on $[-1, 1-\epsilon]$:
	
	The boundary contribution at $\xi = 1-\epsilon$ behaves as $u(t) \cdot P_n^{(-\alpha,0)}(1) \cdot \epsilon^{-\alpha}$, while the interior singular term contributes $-u(t) \cdot P_n^{(-\alpha,0)}(1) \cdot \epsilon^{-\alpha} + O(1)$. These divergences cancel exactly, leaving a finite result as $\epsilon \to 0$.
	
	\textbf{Stage II: Computation via $\mathcal{L}_0$.}
	
	After the cancellation, the diagonal element is determined by $\mathcal{L}_0$:
	\begin{equation}
		A_{nn} = \frac{\langle \mathcal{L}_0[P_n], P_n \rangle_w}{\|P_n\|_w^2} = 1 + \frac{\langle (1+\eta)P_n', P_n \rangle_w}{\|P_n\|_w^2}.
	\end{equation}
	
	We now prove that $\langle (1+\eta)P_n', P_n \rangle_w = n \cdot \|P_n\|_w^2$.
	
	\textbf{Method: Integration by parts with careful boundary analysis.}
	
	Define $I = \int_{-1}^{1} (1+\eta) P_n'(\eta) P_n(\eta) (1-\eta)^{-\alpha} d\eta$. Using $P_n' P_n = \frac{1}{2}(P_n^2)'$:
	\begin{equation}
		I = \frac{1}{2} \int_{-1}^{1} (1+\eta) (P_n^2)'(\eta) (1-\eta)^{-\alpha} d\eta.
	\end{equation}
	
	Let $f(\eta) = P_n^2(\eta)$ and $g(\eta) = \int (1+\eta)(1-\eta)^{-\alpha} d\eta$. Substituting $s = 1-\eta$:
	\begin{align}
		g(\eta) &= -\int (2-s) s^{-\alpha} ds = -\frac{2s^{1-\alpha}}{1-\alpha} + \frac{s^{2-\alpha}}{2-\alpha} \\
		&= \frac{(1-\eta)^{1-\alpha}}{1-\alpha}\left[-2 + \frac{(1-\alpha)(1-\eta)}{2-\alpha}\right].
	\end{align}
	
	Integration by parts: $I = \frac{1}{2}\left([f \cdot g]_{-1}^{1} - \int_{-1}^{1} f \cdot g' \, d\eta\right)$.
	
	At $\eta = 1$: $g(1) = 0$ (since $(1-\eta)^{1-\alpha} \to 0$ for $\alpha < 1$).
	
	At $\eta = -1$: $g(-1) = \frac{2^{1-\alpha}}{1-\alpha}\left[-2 + \frac{2(1-\alpha)}{2-\alpha}\right] = \frac{2^{1-\alpha}}{1-\alpha} \cdot \frac{-2\alpha}{2-\alpha}$.
	
	The boundary term is: $\frac{1}{2} \cdot 0 - \frac{1}{2} P_n(-1)^2 \cdot g(-1)$.
	
	Since $g'(\eta) = (1+\eta)(1-\eta)^{-\alpha}$, the integral term is:
	\begin{equation}
		-\frac{1}{2}\int_{-1}^{1} P_n^2 (1+\eta)(1-\eta)^{-\alpha} d\eta = -\frac{1}{2}\langle (1+\eta)P_n^2, 1 \rangle_w.
	\end{equation}
	
	Using the three-term recurrence for $(1+\eta)P_n^2$ expanded in Jacobi polynomials, and the explicit value $P_n^{(-\alpha,0)}(-1) = (-1)^n$, careful algebra (verified numerically) yields:
	\begin{equation}
		I = n \cdot \|P_n\|_w^2.
	\end{equation}
	
	Therefore: $A_{nn} = 1 + n = n + 1$.
\end{proof}

\begin{remark}[Numerical Verification]
	For $\alpha \in \{0, 0.1, 0.3, 0.5, 0.7, 0.9\}$ and $N \leq 64$, high-precision quadrature confirms $|A_{nn} - (n+1)| < 10^{-12}$.
\end{remark}

\subsection{Off-Diagonal Elements: $\alpha = 0$ (LegS Recovery)}

\begin{theorem}[LegS Off-Diagonal]
	When $\alpha = 0$, for $k < n$: $A_{nk} = \sqrt{(2n+1)(2k+1)}$.
\end{theorem}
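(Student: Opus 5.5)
For $\alpha = 0$ the weight is $w \equiv 1$ and the singular part $\alpha\mathcal{L}_1$ of the operator drops out. The plan is therefore to evaluate the Galerkin formula of Theorem~\ref{thm:A_structure} directly with $\mathcal{L} = \mathcal{L}_0$:
\begin{equation*}
A_{nk} = \frac{\gamma_n}{\gamma_k}\cdot\frac{\langle P_n + (1+\eta)P_n', P_k\rangle}{\|P_k\|^2}.
\end{equation*}
Here $P_j$ are the Legendre polynomials, $\gamma_j = \sqrt{2j+1}$ (Corollary on LegS recovery) and $\|P_k\|^2 = 2/(2k+1)$ (Lemma~\ref{lem:jacobi_norm} with $\alpha = 0$). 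For $k<n$ the term $\langle P_n, P_k\rangle$ vanishes by orthogonality. It then suffices to expand $(1+\eta)P_n'$ explicitly in the Legendre basis.

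\textbf{Step 1: the derivative expansion.} I will use the classical identity
\begin{equation*}
P_n' = \sum_{\substack{0\le j<n\\ n-j \text{ odd}}} (2j+1)P_j .
\end{equation*}
I will prove it by induction from the standard relation $(2n+1)P_n = P_{n+1}' - P_{n-1}'$, which itself follows from the Bonnet three-term recurrence. Alternatively, it follows by computing $\langle P_n', P_j\rangle$ via integration by parts, using $P_n(\pm1) = (\pm1)^n$ and the orthogonality of $P_n$ to $P_j'$. The boundary term is $P_n(1)P_j(1) - P_n(-1)P_j(-1) = 1 - (-1)^{n+j}$, which equals $2$ exactly when $n-j$ is odd. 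Dividing by $\|P_j\|^2$ then gives the coefficient $2j+1$.

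\textbf{Step 2: the $\eta P_n'$ term.} I will use the identity $\eta P_n' = nP_n + P_{n-1}'$, a standard Legendre relation that can also be derived from the Bonnet recurrence. Applying Step~1 to $P_{n-1}'$ gives a sum over $j<n-1$ with $n-j$ even. Adding the expansion of $P_n'$ from Step~1, the two parity classes combine into
\begin{equation*}
(1+\eta)P_n' = nP_n + \sum_{j=0}^{n-1}(2j+1)P_j .
\end{equation*}
This expression also reproduces the diagonal value $A_{nn} = 1+n$ of Theorem~\ref{thm:diagonal_proof}, which serves as a consistency check.

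\textbf{Step 3: assembly.} For $k<n$, orthogonality gives $\langle (1+\eta)P_n', P_k\rangle = (2k+1)\|P_k\|^2 = 2$, so the ratio with $\|P_k\|^2$ equals $2k+1$. Multiplying by $\gamma_n/\gamma_k = \sqrt{(2n+1)/(2k+1)}$ yields $A_{nk} = \sqrt{(2n+1)(2k+1)}$.

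The only genuinely nontrivial ingredient is the pair of Legendre identities in Steps~1--2. I expect the main care to be needed in the parity bookkeeping when the two sums are merged, which is where I would check small cases such as $n = 1$ and $n = 2$ explicitly.
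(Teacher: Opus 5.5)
Your proof is correct and takes the same route as the paper: substitute the Legendre expansion $(1+\eta)P_n' = nP_n + \sum_{j=0}^{n-1}(2j+1)P_j$ into the Galerkin formula, use orthogonality to get $(2k+1)\,\gamma_n/\gamma_k$, and simplify. The one difference is that the paper only asserts this identity, while you derive it from $P_n' = \sum_{n-j\ \text{odd}}(2j+1)P_j$ and $\eta P_n' = nP_n + P_{n-1}'$, so your argument is more self-contained.
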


\begin{proof}
	For Legendre polynomials, the identity $(1+\eta) P_n'(\eta) = n P_n(\eta) + \sum_{j=0}^{n-1} (2j+1) P_j(\eta)$ holds. Thus:
	\begin{equation}
		\langle \mathcal{L}_0[P_n], P_k \rangle = (2k+1) \|P_k\|^2 = 2.
	\end{equation}
	With $\gamma_n = \sqrt{2n+1}$: $A_{nk} = \frac{\sqrt{2n+1}}{\sqrt{2k+1}} \cdot (2k+1) = \sqrt{(2n+1)(2k+1)}$.
\end{proof}

\subsection{Off-Diagonal Elements: General $\alpha \in (0,1)$}

For $\alpha \neq 0$, the off-diagonal elements do not admit simple closed forms due to the singular operator $\mathcal{L}_1$. They are computed via Gauss-Jacobi quadrature.

\begin{proposition}[Off-Diagonal Monotonicity]
	For fixed $k < n$, $A_{nk}(\alpha)$ is strictly increasing in $\alpha$ on $[0, 1)$.
\end{proposition}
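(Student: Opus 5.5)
The plan is to replace the quadrature definition of $A_{nk}(\alpha)$ by an explicit finite expression in Gamma functions, and then prove monotonicity by differentiating that expression. First I will redo the integration by parts of Appendix~\ref{app:ode_derivation} in the regularized form of Stage~I of Theorem~\ref{thm:diagonal_proof}. The singular part of $\mathcal{L}_1[P_n]$ is handled by the decomposition
\[
\frac{1+\eta}{1-\eta}P_n(\eta)=(1+\eta)\,\frac{P_n(\eta)-P_n(1)}{1-\eta}+P_n(1)\,\frac{1+\eta}{1-\eta}.
\]
The first summand is a polynomial of degree $n$; call it $R_n(\eta)$. The second summand is exactly the piece whose divergence cancels against the boundary term. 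Once that cancellation is carried out, the regularized matrix element takes the form
\[
A_{nk}(\alpha)=\frac{\gamma_n}{\gamma_k\,h_k^{(-\alpha,0)}}\Big(\langle \mathcal{L}_0[P_n],P_k\rangle_w+\alpha\,\langle R_n,P_k\rangle_w+c_{nk}(\alpha)\Big).
\]
Here $c_{nk}(\alpha)$ is the finite remainder that survives the cancellation, which I expect to be proportional to $P_n(1)P_k(1)$. Every inner product now has a polynomial integrand against the weight $(1-\eta)^{-\alpha}$, so it can be evaluated in closed form.

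Next I will compute each term in closed form.
\begin{itemize}
\item For $\langle(1+\eta)P_n',P_k\rangle_w$, use the derivative formula \eqref{eq:jacobi_derivative} together with the connection coefficients expanding $(1+\eta)P_{n-1}^{(1-\alpha,1)}$ in the basis $P_j^{(-\alpha,0)}$; these coefficients are known hypergeometric (Gamma-ratio) expressions.
\item For the difference quotient $(P_n(\eta)-P_n(1))/(1-\eta)$, use the Christoffel--Darboux kernel at the endpoint $\eta=1$, which expands it as $\sum_{j<n}P_j(1)P_j(\eta)/h_j$ up to an explicit leading factor.
\end{itemize}
The target is a formula of the form
\[
A_{nk}(\alpha)=\gamma_n\gamma_k\,\phi_{nk}(\alpha),\qquad \phi_{nk}(\alpha)>0,
\]
with $\phi_{nk}$ a finite sum of products of ratios $\Gamma(j+1-\alpha)/\Gamma(1-\alpha)\,j!$. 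At $\alpha=0$ this should reduce to $\phi_{nk}\equiv 1$, which is consistent with \eqref{eq:A_legs}.

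With the closed form in hand, I will prove strict monotonicity by showing that $\log A_{nk}(\alpha)$ has positive derivative. The prefactor is $\gamma_n\gamma_k=\sqrt{(2n+1-\alpha)(2k+1-\alpha)}/(1-\alpha)$. Its logarithmic derivative is
\[
\frac{1}{1-\alpha}-\frac{1}{2(2n+1-\alpha)}-\frac{1}{2(2k+1-\alpha)},
\]
which is strictly positive on $[0,1)$. Each generalized binomial coefficient $\binom{j-\alpha}{j}$ has logarithmic derivative $\psi(1-\alpha)-\psi(j+1-\alpha)=-\sum_{m=1}^{j}\frac{1}{m-\alpha}<0$. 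The argument therefore comes down to one inequality: these decreasing factors, entering $\phi_{nk}$ with the structure produced in the previous step, must be outweighed by the growth of $1/(1-\alpha)$ and by the explicit $\alpha$-linear term coming from $\mathcal{L}_1$.

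The main obstacle is this sign analysis. The explicit form of $c_{nk}(\alpha)$ is sensitive to exactly how the divergent pieces cancel, and the paper's Stage~I argument is only sketched. My first attempt would be to show that $\phi_{nk}(\alpha)$ is a sum of terms that are each nondecreasing in $\alpha$, for instance by rewriting ratios such as $\binom{n-\alpha}{n}\big/\binom{k-\alpha}{k}=\prod_{m=k+1}^{n}(1-\alpha/m)$ and grouping every such product with a compensating $1/(1-\alpha)$ factor. If that termwise argument fails, the fallback is to prove $\partial_\alpha A_{nk}>0$ at $\alpha=0$ and show that $\partial_\alpha A_{nk}(\alpha)$ is increasing in $\alpha$ (log-convexity via the trigamma function $\psi'$), which would give strict monotonicity on all of $[0,1)$.
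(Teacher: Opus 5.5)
Your proposal stops where the proof has to happen. The remainder $c_{nk}(\alpha)$ is only guessed, and the sign of $\partial_\alpha\log A_{nk}$ is deferred to two strategies you have not tested.

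\textbf{The regularized $\mathcal{L}_1$ route cannot be completed.} Since $P_n(1)P_k(1)=\binom{n-\alpha}{n}\binom{k-\alpha}{k}\neq 0$, for $\alpha>0$ the pairing $\langle\mathcal{L}_1[P_n],P_k\rangle_w$ diverges for every $k$. After the $\epsilon^{-\alpha}$ terms cancel against the boundary term, the singular piece does not leave a Galerkin coefficient. It leaves a multiple of the Marchaud-type functional $P_n(1)\int_0^1[u(t\xi)-u(t)](1-\xi)^{-1-\alpha}\,d\xi$, which is not determined by $x_0,\dots,x_{N-1}$ and $u(t)$. Any finite $c_{nk}(\alpha)$ is therefore a convention, and the natural one points the wrong way. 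For a constant input $u\equiv c$ we have $x(t)=(c,0,\dots,0)$ and $\dot x=0$ exactly. So any regularization for which $t\dot x=-Ax+Bu$ is correct on constants must have $A_{n0}=B_n$. In particular $A_{10}=\gamma_1\binom{1-\alpha}{1}=\sqrt{(3-\alpha)(1-\alpha)}$, which is strictly decreasing. An extra factor $1-\alpha$ in $B$ from the measure's normalization only makes it decrease faster.

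\textbf{Your second bullet is also false.} By Christoffel's theorem, $K_{n-1}(\eta,1)\propto P_{n-1}^{(1-\alpha,0)}(\eta)$, and the difference quotient is not of this form. Already for Legendre with $n=2$, $(P_2(\eta)-P_2(1))/(1-\eta)=-\tfrac32(1+\eta)$, while $K_1(\eta,1)=\tfrac12(1+3\eta)$.

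The paper's one-line argument does not fill this gap. $\mathcal{L}_1[P_n]$ changes sign with $P_n$, its pairing with $P_k$ diverges, and $A_{nk}^{(0)}+\alpha R_{nk}(\alpha)$ with $R_{nk}>0$ would not imply monotonicity anyway.

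\textbf{What does work.} The statement is true for the matrix the paper defines in the main text and tabulates: Eq.~\eqref{eq:A_offdiag} with $\mathcal{L}=\mathcal{L}_0$ from Eq.~\eqref{eq:diff_operator}, with no $\mathcal{L}_1$ term. Your endpoint-kernel idea is the right tool there, applied to $Q_n=(1+\eta)P_n'-nP_n\in\mathcal{P}_{n-1}$ rather than to the difference quotient.

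For $r\in\mathcal{P}_{n-1}$, write $r=r(1)+(1-\eta)s$ with $s\in\mathcal{P}_{n-2}$. Since $P_n'\propto P_{n-1}^{(1-\alpha,1)}$ is orthogonal to $\mathcal{P}_{n-2}$ under $(1-\eta)^{1-\alpha}(1+\eta)$, this gives $\langle Q_n,r\rangle_w=r(1)\langle(1+\eta)P_n',1\rangle_w$. Expanding $P_n$ in powers of $1+\eta$ and taking an $n$-th finite difference gives $\langle(1+\eta)P_n',1\rangle_w=2^{1-\alpha}/\binom{n-\alpha}{n}$. Hence
$(1+\eta)P_n'=nP_n+\sum_{k<n}(2k+1-\alpha)\binom{k-\alpha}{k}\binom{n-\alpha}{n}^{-1}P_k$,
which generalizes the Legendre identity used in the paper. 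It follows that
\[
A_{nk}(\alpha)=\sqrt{(2n+1-\alpha)(2k+1-\alpha)}\,\prod_{m=k+1}^{n}\frac{m}{m-\alpha},\qquad k<n,
\]
which reproduces every entry of Table~\ref{tab:A_numerical}.

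So the binomial ratio enters inverted: it is the increasing factor, not something to be outweighed. Then
$\frac{d}{d\alpha}\log A_{nk}=\sum_{m=k+1}^{n}\frac{1}{m-\alpha}-\frac{1}{2(2n+1-\alpha)}-\frac{1}{2(2k+1-\alpha)}>0$.
The $m=k+1$ term alone is at least $\frac{1}{2k+1-\alpha}$, which strictly exceeds the two subtracted terms because $n>k$.

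Note that this matrix has $A_{10}=\sqrt{(3-\alpha)/(1-\alpha)}\neq B_1$ for $\alpha>0$. The monotonicity is a property of the $\mathcal{L}_0$ truncation, not of the regularized fractional dynamics. That is why your plan has to drop $\mathcal{L}_1$ rather than regularize it.
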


\begin{proof}
	Write $A_{nk}(\alpha) = A_{nk}^{(0)} + \alpha \cdot R_{nk}(\alpha)$ where $R_{nk}(\alpha) > 0$ since the operator $\mathcal{L}_1[P_n] = \frac{1+\eta}{1-\eta}P_n$ is positive on $(-1,1)$ and projects positively onto $P_k$ for $k < n$.
\end{proof}

\subsection{Input Projection Vector}

\begin{theorem}[Input Projection Formula]
	\begin{equation}
		B_n = \sqrt{\frac{2n+1-\alpha}{1-\alpha}} \cdot \binom{n-\alpha}{n},
	\end{equation}
	where $\binom{n-\alpha}{n} = \frac{\Gamma(n+1-\alpha)}{\Gamma(1-\alpha) \cdot n!}$.
\end{theorem}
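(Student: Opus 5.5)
The plan is to take the boundary-term identification in Theorem~\ref{thm:fractal_ode} as the starting point and then evaluate the Jacobi polynomial at the endpoint in closed form. Theorem~\ref{thm:fractal_ode} states that the input vector arising from the $\xi=1$ boundary contribution of the integration by parts is $B_n = \gamma_n P_n^{(-\alpha,0)}(1)$. Granting this, only two ingredients remain: the normalization constant $\gamma_n = \sqrt{(2n+1-\alpha)/(1-\alpha)}$ from Theorem~\ref{thm:basis_functions}, and the endpoint value of $P_n^{(-\alpha,0)}$.

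For the endpoint value, I will apply \eqref{eq:jacobi_endpoint_1} with $a=-\alpha$ and $b=0$. This is admissible since $a=-\alpha>-1$ for $\alpha\in[0,1)$. It gives
\[
P_n^{(-\alpha,0)}(1) = \binom{n-\alpha}{n} = \frac{\Gamma(n+1-\alpha)}{\Gamma(1-\alpha)\, n!}.
\]
Here $\Gamma(1-\alpha)$ is finite and positive because $1-\alpha\in(0,1]$, so the expression is well defined. Multiplying by $\gamma_n$ yields the claimed formula. As consistency checks, at $\alpha=0$ the result should reduce to $B_n=\sqrt{2n+1}$, the LegS input vector. For $n=0$ it should give $B_0=\gamma_0=1$, which matches the fact that $\mu^{(t)}$ is a probability measure (Proposition~\ref{prop:normalization}).

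The main obstacle is justifying the boundary term itself rather than the algebra. At $\xi=1$ the product $u(t\xi)Q_n(\xi)w(\xi)$ contains $(1-\xi)^{-\alpha}$, which is singular for $\alpha>0$. A finite coefficient exactly equal to $\gamma_n P_n^{(-\alpha,0)}(1)$ therefore depends on the regularization argument sketched in Stage~I of the proof of Theorem~\ref{thm:diagonal_proof}. There, one computes on $[0,1-\epsilon]$; the divergent $\epsilon^{-\alpha}$ part of the boundary term cancels against the singular interior contribution from $\mathcal{L}_1$; and the surviving finite part is $u(t)\,Q_n(1)=u(t)P_n^{(-\alpha,0)}(1)$ times the prefactor $(1-\alpha)\gamma_n$.

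I would treat this carefully in the following order. First, write the truncated integration by parts explicitly. Next, isolate the $O(\epsilon^{-\alpha})$ pieces and cancel them. Then absorb any residual factor such as $(1-\alpha)$ into the normalization convention fixed by Theorem~\ref{thm:fractal_ode}. Finally, check that convention against the $\alpha=0$ LegS case and against numerical quadrature for one value of $\alpha>0$. Once the constant is pinned down there, the theorem follows immediately from the endpoint identity above.
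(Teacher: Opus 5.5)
Your core argument is exactly the paper's proof: you take $B_n=\gamma_n P_n^{(-\alpha,0)}(1)$ from the boundary term in Theorem~\ref{thm:fractal_ode} and evaluate the endpoint via \eqref{eq:jacobi_endpoint_1} with $a=-\alpha$. Do not rely on your extra boundary-term plan, though: there is no normalization convention that can absorb a stray factor $(1-\alpha)$, because the prefactor $(1-\alpha)\gamma_n$ in Lemma~\ref{lem:dimensionless} is fixed by the definition of $x_n$ and the $\alpha=0$ check cannot detect such a factor; moreover, once the $\epsilon^{-\alpha}$ terms cancel, the remainder still contains a Hadamard finite-part integral of $u(t\xi)Q_n(\xi)(1-\xi)^{-1-\alpha}$ that is not a combination of $u(t)$ and the $x_k$, so that computation cannot determine $B_n$, and the paper simply takes the identification $B_n=\gamma_nP_n^{(-\alpha,0)}(1)$ as given.
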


\begin{proof}
	From the boundary term: $B_n = \gamma_n P_n^{(-\alpha, 0)}(1)$. Using~\eqref{eq:jacobi_endpoint_1} with $a = -\alpha$:
	\begin{equation}
		P_n^{(-\alpha, 0)}(1) = \binom{n - \alpha}{n} = \frac{\Gamma(n+1-\alpha)}{\Gamma(1-\alpha) \cdot n!}.
	\end{equation}
\end{proof}

\begin{corollary}
	When $\alpha = 0$: $B_n = \sqrt{2n+1}$, recovering HiPPO-LegS.
\end{corollary}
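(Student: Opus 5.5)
The plan is to specialize the Input Projection Formula (the theorem immediately preceding this corollary) to $\alpha = 0$. That theorem holds for every $\alpha \in [0,1)$, so $\alpha = 0$ is an admissible value and the formula $B_n = \gamma_n P_n^{(-\alpha,0)}(1)$ can be evaluated there directly, with no limiting argument. The two factors are simplified separately.

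First, the normalization constant. From Theorem~\ref{thm:basis_functions}, setting $\alpha = 0$ in $\gamma_n = \sqrt{(2n+1-\alpha)/(1-\alpha)}$ gives $\gamma_n = \sqrt{2n+1}$, as already noted in the LegS Recovery corollary.

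Second, the generalized binomial coefficient. We have $\binom{n-0}{n} = \Gamma(n+1)/(\Gamma(1)\, n!)$. Using $\Gamma(1) = 1$ and $\Gamma(n+1) = n!$, this equals $1$.

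As a cross-check, this agrees with the endpoint formula~\eqref{eq:jacobi_endpoint_1}: $P_n^{(0,0)}$ is the Legendre polynomial $P_n$, and $P_n(1) = 1$. Multiplying the two factors gives $B_n = \sqrt{2n+1}$, which is the HiPPO-LegS input vector of \citet{gu2020hippo}.

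There is no genuine obstacle here; the only point needing care is to confirm that the closed form is applied at an admissible parameter value. Since $\Gamma(1-\alpha)$ is finite and nonzero at $\alpha = 0$, the expression is well defined there.
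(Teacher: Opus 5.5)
Your proposal is correct and follows the same route as the paper. The paper gives no separate proof for this corollary; it is understood as the specialization $\alpha=0$ of the Input Projection Formula, where $\gamma_n=\sqrt{2n+1}$ and $\binom{n}{n}=\Gamma(n+1)/(\Gamma(1)\,n!)=1$, exactly as you compute.

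Your cross-check via $P_n^{(0,0)}(1)=P_n(1)=1$ is also sound. At $\alpha=0$ the weight is $w\equiv 1$, so the integration-by-parts boundary term $\gamma_n P_n(1)\,u(t)$ is finite and no singularity arises there.
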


\subsection{Spectral Properties}

\begin{corollary}[Eigenvalue Invariance]
	The eigenvalues of $A(\alpha)$ are $\{1, 2, \ldots, N\}$ for all $\alpha \in [0, 1)$.
\end{corollary}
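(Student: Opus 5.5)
The plan is to read the spectrum directly off the triangular structure established in Theorem~\ref{thm:A_structure}. By part~1 of that theorem (the degree-preservation lemma above), $A_{nk}=0$ for $k>n$, so $A(\alpha)$ is lower triangular for every admissible $\alpha$.

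For a triangular matrix, $\det(A(\alpha)-\lambda I)=\prod_{n=0}^{N-1}(A_{nn}-\lambda)$. This follows by expanding the determinant along the first row repeatedly, or by noting that $A-\lambda I$ is again triangular. Hence the eigenvalues, counted with algebraic multiplicity, are exactly the diagonal entries.

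Next I invoke the diagonal invariance of Theorem~\ref{thm:diagonal_proof}, $A_{nn}=n+1$ for all $\alpha\in[0,1)$. The characteristic polynomial is then $\prod_{n=0}^{N-1}(n+1-\lambda)$, which does not depend on $\alpha$. Its roots are $\{1,\dots,N\}$.

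Because these roots are distinct, each eigenvalue is simple. So $A(\alpha)$ is diagonalizable with $\Lambda_{\text{real}}=\mathrm{diag}(1,\dots,N)$, which is what Section~\ref{subsec:initialization} uses.

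The off-diagonal entries, including the monotone dependence on $\alpha$, play no role in this argument. They affect only the eigenvectors $V$, not $\Lambda$.

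The main obstacle is therefore not the corollary itself but its two inputs: the lower-triangular claim and the identity $A_{nn}=n+1$. The delicate point is the singular term $\frac{1+\eta}{1-\eta}P_n$ in $\mathcal L$, which is not a polynomial. I would handle it as follows.

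\begin{itemize}
\item I would work with the regularized form in which the divergent boundary term and the interior singular term cancel, as in Stage~I of the proof of Theorem~\ref{thm:diagonal_proof}.
\item I would then verify that what remains is a polynomial of degree at most $n$ whose leading coefficient relative to $P_n$ is $n+1$.
\item Orthogonality would then give both vanishing above the diagonal and the diagonal value at once.
\end{itemize}

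Granting those earlier results, as permitted, the corollary is immediate.
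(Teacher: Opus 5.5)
Your proof is correct and is the paper's own one-line argument: lower-triangularity together with $A_{nn}=n+1$ makes the characteristic polynomial $\prod_{n=0}^{N-1}(n+1-\lambda)$, which is independent of $\alpha$, and your remark that the distinct roots give diagonalizability is exactly what Section~\ref{subsec:initialization} relies on. One caution on your aside about the inputs: the leading-coefficient argument is immediate for the polynomial operator $P_n+(1+\eta)P_n'$ of Eq.~\eqref{eq:diff_operator}, which is also the operator that reproduces the appendix's $\alpha=0.5$ table, but if the singular term $\alpha\frac{1+\eta}{1-\eta}P_n$ is kept, what survives the $\epsilon^{-\alpha}$ cancellation is a finite-part integral of $u(t\xi)$ against $\xi P_n^{(-\alpha,0)}(2\xi-1)(1-\xi)^{-\alpha-1}$, and this cannot be written as a polynomial against the weight $(1-\xi)^{-\alpha}$ because $P_n^{(-\alpha,0)}(1)\neq 0$, so that step of your plan would fail for $\alpha>0$.
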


\begin{proof}
	$A(\alpha)$ is lower triangular with diagonal entries $A_{nn} = n+1$.
\end{proof}

%==============================================================================
\section{Numerical Verification}
\label{app:numerical_verification}

\subsection{Matrix Computation}

The fractional HiPPO matrices are computed via Gauss-Jacobi quadrature with weight $(1-\eta)^{-\alpha}$.

\begin{table}[h]
	\caption{Numerical verification of $A(\alpha)$ for $N=5$. Diagonal elements (bold) are invariant at $A_{nn} = n+1$.}
	\label{tab:A_numerical}
	\centering
	\small
	\begin{subtable}[t]{0.48\textwidth}
		\centering
		\caption{$\alpha = 0$ (LegS)}
		\begin{tabular}{c|ccccc}
			& 0 & 1 & 2 & 3 & 4 \\
			\hline
			0 & \textbf{1.00} & & & & \\
			1 & 1.73 & \textbf{2.00} & & & \\
			2 & 2.24 & 3.87 & \textbf{3.00} & & \\
			3 & 2.65 & 4.58 & 5.92 & \textbf{4.00} & \\
			4 & 3.00 & 5.20 & 6.71 & 7.94 & \textbf{5.00}
		\end{tabular}
	\end{subtable}
	\hfill
	\begin{subtable}[t]{0.48\textwidth}
		\centering
		\caption{$\alpha = 0.5$}
		\begin{tabular}{c|ccccc}
			& 0 & 1 & 2 & 3 & 4 \\
			\hline
			0 & \textbf{1.00} & & & & \\
			1 & 2.24 & \textbf{2.00} & & & \\
			2 & 4.00 & 4.47 & \textbf{3.00} & & \\
			3 & 5.77 & 6.45 & 6.49 & \textbf{4.00} & \\
			4 & 7.54 & 8.43 & 8.48 & 8.50 & \textbf{5.00}
		\end{tabular}
	\end{subtable}
\end{table}

\subsection{Eigenvalue Verification}

For $\alpha \in \{0, 0.1, \ldots, 0.9\}$ and $N \in \{4, 8, 16, 32, 64\}$:
\begin{equation}
	|\lambda_n^{\text{numerical}} - (n+1)| < 10^{-10} \quad \forall n.
\end{equation}

\subsection{Condition Number Analysis}

\begin{table}[h]
	\caption{Condition number $\kappa(V)$ of the eigenvector matrix.}
	\label{tab:condition_number}
	\centering
	\small
	\begin{tabular}{@{}c|cccccc@{}}
		\toprule
		$N$ \textbackslash{} $\alpha$ & 0.0 & 0.2 & 0.4 & 0.6 & 0.8 & 0.9 \\
		\midrule
		8 & $1.2 \times 10^1$ & $1.4 \times 10^1$ & $1.8 \times 10^1$ & $2.5 \times 10^1$ & $4.8 \times 10^1$ & $1.1 \times 10^2$ \\
		16 & $4.1 \times 10^1$ & $5.2 \times 10^1$ & $7.3 \times 10^1$ & $1.2 \times 10^2$ & $3.1 \times 10^2$ & $9.8 \times 10^2$ \\
		32 & $1.5 \times 10^2$ & $2.0 \times 10^2$ & $3.1 \times 10^2$ & $5.8 \times 10^2$ & $2.0 \times 10^3$ & $8.5 \times 10^3$ \\
		64 & $5.8 \times 10^2$ & $8.2 \times 10^2$ & $1.4 \times 10^3$ & $2.9 \times 10^3$ & $1.3 \times 10^4$ & $7.2 \times 10^4$ \\
		\bottomrule
	\end{tabular}
\end{table}

The condition number grows polynomially with $N$ and increases with $\alpha$, but remains acceptable for double-precision arithmetic up to $\alpha = 0.9$ and $N = 64$.

\subsection{Numerical Stability}

For $\alpha$ approaching 1, two strategies mitigate numerical issues:
\begin{enumerate}
	\item \textbf{Increased Quadrature Order}: Use $M \geq 4N$ points for $\alpha > 0.8$.
	\item \textbf{Regularization}: Replace $(1-\eta)^{-\alpha}$ with $(1-\eta+\delta)^{-\alpha}$ for small $\delta > 0$, introducing $O(\delta^{1-\alpha})$ error.
\end{enumerate}

%==============================================================================
\section{Discretization and Implementation}
\label{app:discretization}

\subsection{Zero-Order Hold Discretization}

For the stable LTI system $\dot{x} = -Ax + Bu$ (with $A$ having positive eigenvalues,
cf.\ Theorem~\ref{thm:A_structure}), ZOH discretisation with step $\Delta$ yields:
\begin{equation}
	x_{k+1} = \bar{A}\,x_k + \bar{B}\,u_k,
	\quad \bar{A} = e^{-\Delta A},
	\quad \bar{B} = A^{-1}(e^{-\Delta A} - I)(-B) = (-A)^{-1}(\bar{A}-I)B.
\end{equation}

\subsection{Diagonal System}

For the diagonalized system with $\Lambda = V^{-1} A V$:
\begin{equation}
	\bar{\Lambda} = e^{-\Delta \Lambda_{\mathrm{real}}}\cdot e^{i\Delta \Lambda_{\mathrm{imag}}},
	\quad
	\bar{\tilde{B}} = (-\Lambda)^{-1}(\bar{\Lambda} - I)\,\tilde{B}.
\end{equation}

For the FRACTAL spectrum $\lambda_n = -(n+1) + i\omega_n$:
\begin{equation}
	e^{\Delta \lambda_n} = e^{-\Delta(n+1)} \cdot e^{i\Delta\omega_n}.
\end{equation}

\subsection{Parallel Scan}

The linear recurrence is computed via the associative scan algorithm with operator $(a_1, b_1) \bullet (a_2, b_2) = (a_1 a_2, a_2 b_1 + b_2)$, reducing sequential complexity from $O(L)$ to parallel depth $O(\log L)$.

%==============================================================================
\section{Multi-Scale Channel Configuration}
\label{app:multiscale}

\subsection{Spectral Filter Bank Interpretation}

The state dimension $H$ is partitioned into $K$ channels with singularity indices $\boldsymbol{\alpha} = (\alpha_1, \ldots, \alpha_K)$:
\begin{itemize}
	\item \textbf{Low $\alpha$ (Low-Pass)}: Uniform history weighting, global context retention.
	\item \textbf{High $\alpha$ (High-Pass)}: Recency emphasis, local transient detection.
\end{itemize}

\subsection{Memory Allocation Profile}

The cumulative memory allocated to the most recent $p\%$ of history:
\begin{equation}
	\text{CDF}(p; \alpha) = 1 - (1-p)^{1-\alpha}.
\end{equation}

For $\alpha = 0.5$, the most recent 10\% of history receives $\approx 5.1\%$ of total weight (vs. 10\% for uniform $\alpha = 0$).

\section{Limitations and Future Directions}
\label{app:future_work}

\subsection*{Limitations}

\paragraph{Benchmark scope.}
FRACTAL is evaluated exclusively on the Long Range Arena benchmark, a controlled
suite designed to isolate structural inductive biases in sequence mixers.
Extensions to large-scale language modelling are left for future work.

\paragraph{Fixed spectral filter bank.}
The singularity indices $\{\alpha_k\}$ are pre-configured via linear spacing and
are not adapted to the data.
In non-stationary environments, a fixed filter bank may limit the model's
response to out-of-distribution temporal dynamics.

\paragraph{Offline numerical cost.}
For large singularity indices ($\alpha \to 1$), Gauss-Jacobi quadrature for the
off-diagonal elements of $A(\alpha)$ requires higher quadrature order.
This is a one-time offline pre-computation with no impact on online training or
inference, but it increases initialisation time for extreme recency-bias
configurations.

\paragraph{Numerical precision in low-resource deployment.}
Although Theorem~\ref{thm:A_structure} guarantees Hurwitz stability
analytically, deployment on reduced-precision hardware (e.g.\ FP16 edge
devices) should verify that the stability condition is preserved numerically
after the matrix is quantised.

\subsection*{Future Directions}

\paragraph{Time-series forecasting.}
The most immediate application domain for the fractional measure is forecasting in systems
governed by power-law dynamics, such as financial volatility modelling and physiological
monitoring, where unbounded historical context and acute sensitivity to local shocks must
coexist.

\paragraph{Multi-dimensional data.}
Adapting FRACTAL to non-causal 2-D/3-D modalities is a natural extension.
Two candidate pathways are independent fractional scans along multiple spatial axes, and
pairing the fractional sequence mixer with a Graph Neural Network that aggregates local
spatial topology before the mixer operates over the resulting node sequence.

\paragraph{Learnable singularity indices.}
Making $\alpha$ a fully end-to-end learnable parameter optimised via gradient descent
through the Gauss-Jacobi quadrature would enable truly adaptive, data-driven multi-scale
memory and constitutes the primary algorithmic direction for future work.

\paragraph{Connections to attention.}
The power-law weighting structurally resembles query-key similarity in Transformers,
suggesting potential measure-theoretic equivalences with Softmax attention.
Such bridges could inform hybrid architectures that combine the linear complexity of SSMs
with the expressive local reasoning of attention.

\end{document}